\newcommand{\showfont}{encoding: \f@encoding{},
  family: \f@family{},
  series: \f@series{},
  shape: \f@shape{},
  size: \f@size{}
}
\newcommand{\argmin}{\operatornamewithlimits{argmin}}
\theoremstyle{definition}
\newtheorem{definition}{Definition}
\newtheorem*{definition*}{Definision}
\crefname{definition}{Definition}{Definition}
\theoremstyle{definition}
\newtheorem{proposition}{Proposition}
\newtheorem*{proposition*}{Proposition}
\crefname{proposition}{Proposition}{Proposition}
\theoremstyle{definition}
\newtheorem{lemma}{Lemma}
\newtheorem*{lemma*}{Lemma}
\crefname{lemma}{Lemma}{Lemma}
\theoremstyle{definition}
\newtheorem{corollary}{Corollary}
\newtheorem*{corollary*}{Corollary}
\crefname{corollary}{Corollary}{Corollary}
\newtheorem{theorem}{Theorem}
\newtheorem*{theorem*}{Theorem}
\crefname{theorem}{Theorem}{Theorem}
\theoremstyle{definition}
\newtheorem{assumption}{Assumption}
\newtheorem*{assumption*}{Assumption}
\theoremstyle{remark}
\newtheorem{remark}{Remark}
\newtheorem*{remark*}{Remark}
\newtheorem*{example*}{\textbf{例}}
\numberwithin{example}{subsection}
\newcommand{\abs}[1]{\left\lvert{#1}\right\rvert} 
\newcommand{\E}[2][]{\mathbb{E}_{#1} \left[ {#2} \right]} 
\renewcommand{\epsilon}{\varepsilon}
\newcommand{\cA}{\mathcal{A}}
\newcommand{\cL}{\mathcal{L}} 
\newcommand{\cP}{\mathcal{P}}
\newcommand{\cS}{\mathcal{S}}
\title{On Symmetric Losses for Robust Policy Optimization with Noisy Preferences}
\author{%
  Soichiro Nishimori\textsuperscript{1,2}, 
  Yu-jie Zhang\textsuperscript{2},
  Thanawat Lodkaew\textsuperscript{1},
  Masashi Sugiyama\textsuperscript{2, 1} \\
  $^{1}$The University of Tokyo, Japan\\
  $^{2}$RIKEN AIP, Japan
}
\begin{document}

\maketitle
\begin{abstract}
  Optimizing policies based on human preferences is the key to aligning language models with human intent.
 This work focuses on reward modeling, a core component in reinforcement learning from human feedback (RLHF), and offline preference optimization, such as direct preference optimization.
 Conventional approaches typically assume accurate annotations. However, real-world preference data often contains noise due to human errors or biases.
 We propose a principled framework for robust policy optimization under noisy preferences based on the view of reward modeling as a classification problem.
 This viewpoint allows us to leverage symmetric losses, well known for their robustness to the label noise in classification, for reward modeling, which leads to our Symmetric Preference Optimization (SymPO) method, a novel offline preference optimization algorithm. Theoretically, we prove that symmetric losses enable successful policy optimization even with noisy labels, as the resulting reward is rank-preserving—a property we identify as sufficient for policy improvement.
 Empirical evaluations on a synthetic dataset and real-world language model alignment tasks demonstrate the efficacy of SymPO. The code is available at \url{https://github.com/nissymori/SymPO}.
\end{abstract}

\section{Introduction}

Policy optimization with human preferences aims to train a policy that aligns with human desires, given pairs of actions $(a_1, a_2)$ and annotations indicating which action is preferred ($a_1 \succ a_2$ or $a_1 \prec a_2$)~\citep{ouyang2022training, stiennon2020learning, rafailov2024direct}.
This paradigm has become increasingly prominent in language model alignment, where the goal is to develop models that behave according to human values, preferences, and instructions.

A central component of this process is reward modeling, which involves learning an underlying reward function from preference data.
Reward modeling is the foundation for two major approaches to policy optimization using preference data: Reinforcement Learning from Human Feedback (RLHF)~\citep{ouyang2022training} and offline preference optimization~\citep{tang2024generalized}.
In RLHF, a reward model is first trained and then used to fine-tune the policy via on-policy RL methods.
In contrast, offline preference optimization directly learns the policy from the collected preference data based on a reward modeling objective, as exemplified by Direct Preference Optimization (DPO)~\citep{rafailov2024direct}.

Most existing methods assume that preference labels are accurate.
However, real-world preference data often suffer from noise due to annotation errors or systematic biases~\citep{gao2024impact}.
This issue has garnered particular attention in offline preference optimization~\citep{gao2024impact, chowdhury2024provably, liang2024robust, wu2024towards, fisch2024robust}.
These methods either require the prior knowledge of the noise rate~\citep{chowdhury2024provably} or entail additional hyperparameters~\citep{liang2024robust, wu2024towards} to be tuned.
Furthermore, they assume \textit{symmetric noise}~\citep{van2015learning}, where the preferences
will flip with equal probability for $a_1 \succ a_2$ and $a_1 \prec a_2$.
However, preference noise is often \textit{asymmetric} in practice.
For instance, an annotator may systematically favor responses from GPT-4~\citep{achiam2023gpt} over those from GPT-3~\citep{brown2020language} regardless of quality, introducing one-sided bias.

In this paper, we propose a general framework for learning reward models from noisy preferences, grounded in the perspective of binary classification.
Viewing reward modeling as a \textit{risk minimization} in binary classification provides both methodological and theoretical benefits.

\begin{wrapfigure}{r}{0.40\textwidth}
    \vspace{-1.0em}
    \centering
    \includegraphics[width=1.0\linewidth]{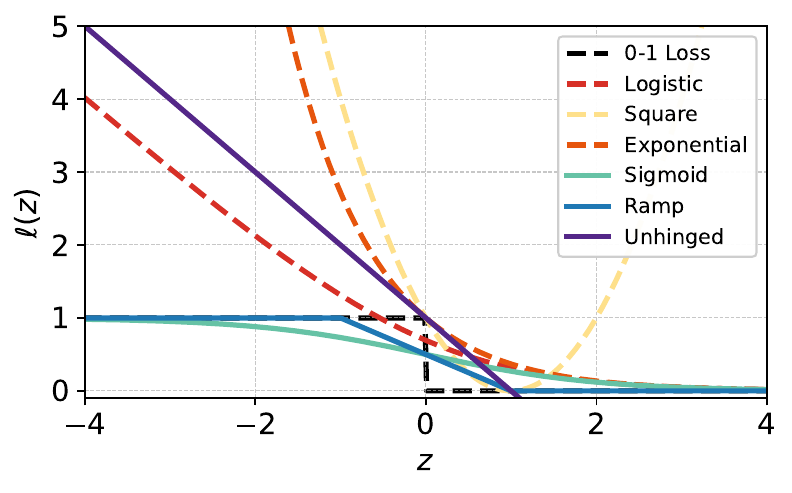}
    \vspace{-0.5em}
    \setlength{\abovecaptionskip}{0.0pt}
    \caption{The symmetric losses are plotted with a solid line, while the $0-1$ loss and convex losses with a dashed line.}
    \label{fig: symmetric_vs_convex}
    \vspace{-0.5em}
\end{wrapfigure}

Methodologically, this classification viewpoint provides a principled way to handle asymmetric noise and enables the development of a robust objective function for reward modeling.
Specifically, it makes an inherent structure in preference data explicit: swapping the positions of input pairs flips the label.
Leveraging this, we show that asymmetric and symmetric noise are equivalent in reward modeling (Sec.~\ref{sec:asymmetric_noise_model_on_preference_label}).
For robust reward modeling, we then employ \textit{symmetric losses}, $\ell: \mathbb{R} \rightarrow \mathbb{R}$, that satisfy so called the symmetric condition: $\ell(z) + \ell(-z) = K$ for some constant $K$ as shown in Fig.~\ref{fig: symmetric_vs_convex} (Sec.~\ref{sec:the_power_of_symmetric_loss}).
Symmetric losses are proven to be robust against the symmetric label noise in binary classification settings~\citep{van2015learning,charoenphakdee2019symmetric}, and we extend this result to show their robustness to the general asymmetric noise in reward modeling.
This gives rise to our method: \textbf{Sym}metric \textbf{P}reference \textbf{O}ptimization (\textbf{SymPO}), a novel offline preference optimization algorithm based on the symmetric loss.

In our theoretical analysis, we uncover a connection between a property of loss functions in binary classification and the success of policy optimization.
We first pinpoint a sufficient condition of the reward for policy improvement in policy optimization: \emph{rank preservation} meaning that actions' ordering matches the true underlying reward function (Sec.~\ref{sec:rank_preserving_reward_and_policy_improvement}).
Next, we prove that minimizing the risk with \textit{classification-calibrated losses}~\citep{bartlett2006convexity} leads to rank preservation, showing that a broad class of loss functions—including symmetric ones—are appropriate for policy optimization.
By combining the robustness of symmetric losses with this policy improvement guarantee, we provide theoretical justification for our method, SymPO (Sec.~\ref{sec:connection_between_loss_function_and_rank_preserving_reward}).

Finally, we validate our approach through two experiments: a synthetic setup based on MNIST and language model alignment tasks.
In the MNIST setting, we evaluate the robustness of symmetric losses for reward modeling.
The language model alignment experiment shows that SymPO facilitates robust policy improvement with noisy preference data.

\paragraph{Related Work.}
Research on noise-robust preference optimization has been extensive. \cite{chowdhury2024provably} proposed Robust DPO (rDPO), which adapts noisy label learning techniques \citep{natarajan2013learning} used in statistical machine learning to construct an unbiased loss from noisy preference data, given prior knowledge of the noise rate.
\cite{wu2024towards} categorized noise in preference data as pointwise and pairwise and proposed Distributionally Robust DPO (DrDPO), which applies distributionally robust optimization \citep{duchi2021learning} to address these types of noise.
\cite{liang2024robust} introduced Robust Preference Optimization (ROPO), which combines noisy sample filtering, rejection sampling, and sigmoid-based regularization processes.
Unlike these works, we address general asymmetric noise and prove its equivalence to symmetric noise in risk minimization.
Furthermore, our method, SymPO, achieves robustness with theoretical guarantees for policy improvement while maintaining methodological simplicity. While prior work \citep{tang2024generalized} also framed reward modeling as a binary classification task, it focused on convex losses, such as logistic, exponential, and squared losses (see Fig.~\ref{fig: symmetric_vs_convex}).
In contrast, we explore non-convex symmetric losses for their robustness to noisy labels.
Please refer to App.~\ref{app:related_work} for a complete list of related works.

\paragraph{Contributions.}
The main contributions of this paper are as follows:
1) We leverage the binary classification perspective of reward modeling to prove the equivalence between asymmetric and symmetric noise, and we propose a new offline preference optimization method, SymPO, that exploits the robustness of symmetric losses.
2) We bridge the gap between classification theory and policy optimization by proving that classification-calibrated losses yield policy improvement, thereby supporting the use of symmetric losses in policy optimization.
3) We validate the robustness of the symmetric losses in reward modeling and policy optimization through experiments.

\section{Preliminaries}\label{sec:preliminaries}
Sec.~\ref {sec:reward_modeling_and_rlhf} establishes reward modeling as the grounding concept for policy optimization from the human preferences.
We explain the conventional way to learn reward function from human preferences based on the Bradley-Terry (BT) model, and demonstrate how it supports two major policy optimization paradigms: RLHF and offline preference optimization.
Sec.~\ref{sec:reward_modeling_as_binary_classification} then casts reward modeling as the binary classification, preparing our approach to deal with noisy preferences.

\subsection{Reward Modeling and Policy Optimization}\label{sec:reward_modeling_and_rlhf}
In reward modeling with the Bradley-Terry (BT) model \citep{bradley1952rank}, we assume for a pair of actions $(a_1, a_2) \in \mathcal{A} \times \mathcal{A}$ ($\mathcal{A} \subset \mathbb{R}^d$ and $\abs{\cA} < \infty$), the preference $a_1 \succ a_2$ is given by
\begin{equation}\label{eq:bt_model}
 p(a_1 \succ a_2) = \sigma(r_\mathrm{true}(a_1) - r_\mathrm{true}(a_2)),
\end{equation}
where $r_\mathrm{true}: \mathcal{A} \to \mathbb{R}$ is an underlying reward function and $\sigma$ is the sigmoid function.

\paragraph{Reward Modeling.}
Based on the BT model, given preference data in the form of $\mathcal{D} = \{(a^i_1 \succ a^i_2)\}_{i=1}^n$, we train the reward model $r: \mathcal{A} \to \mathbb{R}$ by minimizing the following objective:
\begin{equation}\label{eq:reward_modeling_objective}
    \mathcal{L}(r) = - \E[{(a_1, a_2) \sim \mathcal{D}}]{ \log \sigma(r(a_1) - r(a_2))},
\end{equation}
where $\E[{(a_1, a_2) \sim \mathcal{D}}]{\cdot}$ is the empirical average over the preference dataset $\mathcal{D}$.
As explained subsequently, reward modeling plays a central role in two primary policy optimization approaches.

\paragraph{Reinforcement Learning from Human Feedback (RLHF).}
In RLHF \citep{ouyang2022training}, we first train a reward function via Eq.~\eqref{eq:reward_modeling_objective}.
Then, we optimize a policy $\pi: \mathcal{A} \to [0, 1]$ based on the Kullback-Leibler (KL) regularized reward maximization problem as \footnote{In conventional RLHF, the reward and policy takes some states, or prompts as the input, which we omit for brevity.}
\begin{equation}\label{eq:rlhf_problem}
    \max_{\pi} \E[a \sim \pi]{r(a)} - \beta \mathrm{KL}(\pi, \pi_\mathrm{ref}),
\end{equation}
where $\pi_{\rm{ref}}$ is a reference policy, $\beta > 0$ is the temperature parameter controlling the strength of the regularization term, and $\mathrm{KL}(p, q)$ is the KL divergence of $p$ from $q$. We optimize the policy using an on-policy RL algorithm such as PPO \citep{ouyang2022training,schulman2017proximal}.

\paragraph{Offline Preference Optimization.}
Recently, DPO \citep{rafailov2024direct} was proposed to directly optimize the policy without the need for explicit reward modeling.
They leveraged an analytical solution of the Eq.~\eqref{eq:rlhf_problem} to construct an \textit{implicit reward} denoted as
\begin{equation}
 r^\pi_\mathrm{imp}(a) := \beta \log \frac{\pi(a)}{\pi_\mathrm{ref}(a)}.
\end{equation}
Then, by assigning it into the reward in Eq.~\eqref{eq:reward_modeling_objective}, we have the policy optimization objective as

\begin{equation}\label{preference_optimization_as_binary_classification}
    \cL(\pi) := \cL(r^\pi_\mathrm{imp}) = - \E[{(a_1, a_2) \sim \mathcal{D}}]{ \log \sigma\left(\beta \log \frac{\pi(a_1)}{\pi_\mathrm{ref}(a_1)} - \beta \log \frac{\pi(a_2)}{\pi_\mathrm{ref}(a_2)}\right)}.
\end{equation}

Reward modeling lies at the heart of both RLHF and offline preference optimization, serving as a key mechanism in the former and an objective driver in the latter.
Therefore, we center our study on reward modeling as the foundation of policy optimization from human preferences.

\subsection{Reward Modeling as Binary Classification}\label{sec:reward_modeling_as_binary_classification}
This section shows that the reward modeling problem can be cast as a binary classification task, enabling the use of diverse loss functions.
For an input $(a_1, a_2) \in \mathcal{A} \times \mathcal{A}$, we define the label as $y=+1$ if $a_1 \succ a_2$, and $y=-1$ if $a_1 \prec a_2$ \footnote{We do not consider the case of tie $a_1 \sim a_2$ following the previous studies \citep{tang2024generalized, rafailov2024direct, zhao2023slic, azar2024general}}.
Let $p(a_1, a_2, y)$ be the joint density of input and label, $p(a_1, a_2)$ be the marginal density of input, and $p(y=+1) = \pi_\mathrm{p}$ be the positive class prior.
We emphasize a straightforward property of this labeling scheme: for any $(a_1, a_2)$ if we flip the position of $a_1$ and $a_2$, the label is flipped, e.g., if the label $y$ of $(a_1, a_2)$ is $+1 \iff a_1 \succ a_2$, then, the label of flipped input $(a_2, a_1)$ is $-1$.
It plays a crucial role in handling the preference noise in Sec.~\ref{sec:noisy_preference_label}.

Let $g: \mathcal{A} \times \mathcal{A} \to \mathbb{R}$ be the scoring function and $\ell: \mathbb{R} \rightarrow \mathbb{R}$ be the loss function.
We optimize $g$ to minimize the $\ell$-risk defined as
\begin{align*}
R_\ell(g) := \mathbb{E}_{a_1, a_2, y}[\ell (y g(a_1, a_2))],
\end{align*}
where $\E[a_1, a_2, y]{\cdot}$ is the expectation over $p(a_1, a_2, y)$.
In this study, we constrain $g(a_1, a_2)$ to have the form of $g(a_1, a_2) = r(a_1) - r(a_2)$ for some function $r: \mathcal{A} \to \mathbb{R}$.
Then, the $\ell$-risk for reward modeling we consider can be expressed as
\begin{align}\label{eq:risk_for_reward_modeling}
R_\ell(r) &:= \mathbb{E}_{a_1, a_2, y}[\ell (y (r(a_1) - r(a_2)))].
\end{align}
We define the optimal $\ell$-risk as $R^*_\ell := \inf_{r} R_\ell(r)$, where infimum is taken over all measurable functions.
As discussed in \cite{tang2024generalized}, this formulation is natural generalization of the reward modeling objective defined in Eq.~\eqref{eq:reward_modeling_objective}, because having $\ell(z) = \log (1 + e^{-z})$ (logistic loss) in Eq.~\eqref{eq:risk_for_reward_modeling} recovers the objective.
Just as in Sec.~\ref {sec:reward_modeling_and_rlhf}, we can construct the offline preference optimization objective by putting $r^\pi_\mathrm{imp}$ into $R_\ell(r)$.

\section{Noisy Preference Label and Symmetric Loss}\label{sec:noisy_preference_label}
In this section, based on the binary classification framework in Sec.~\ref{sec:reward_modeling_as_binary_classification}, we introduce an asymmetric noise model for preference labels and prove its equivalence to a symmetric noise model (Sec.~\ref{sec:asymmetric_noise_model_on_preference_label}).
We then demonstrate the robustness of symmetric losses to the asymmetric noise in reward modeling and develop a corresponding preference optimization algorithm (Sec.~\ref{sec:the_power_of_symmetric_loss}).

\subsection{Noise Model on Preference Label}\label{sec:asymmetric_noise_model_on_preference_label}

We investigate the noisy preference label from the perspective of binary classification.
Suppose we are given a dataset $\mathcal{D} = \{(a^i_1, a^i_2, \tilde{y}_i)\}_{i=1}^n$, where $\tilde{y}_i \in \{-1, +1\}$ is the noisy preference label.
We assume the noise model as
\begin{align}\label{eq:noise_model_on_preference_label_binary}
p(\tilde{y} =  -1 | y =  +1) = \epsilon_\mathrm{p}, \quad
p(\tilde{y} =  +1 | y =  -1) = \epsilon_\mathrm{n},
\end{align}
where $\epsilon_\mathrm{p}, \epsilon_\mathrm{n} \in [0, 0.5)$ are the error rates.
ROPO \citep{liang2024robust} and rDPO \citep{chowdhury2024provably} assume symmetric noise, where $\epsilon_\mathrm{p} = \epsilon_\mathrm{n}$ \citep{van2015learning}.
In contrast, our asymmetric noise model accounts for more realistic scenarios, such as systematically corrupted labels. For example, when noisy labels are consistently flipped to either $+1$ or $-1$, the noise becomes fully asymmetric.
For instance,
$p(\tilde{y} = -1 \mid y = +1) = 0,
p(\tilde{y} = +1 \mid y = -1) = 1,$
which can occur when an annotator consistently prefers outputs from a particular model (e.g., GPT-4 over GPT-3.5), irrespective of actual quality.
However, we can show that the risk with asymmetric noise is equivalent to that with symmetric noise by leveraging the flipping property of the preference label in Sec.~\ref{sec:reward_modeling_as_binary_classification}.

\begin{figure}[!t]
    \centering
    \includegraphics[width=1\linewidth]{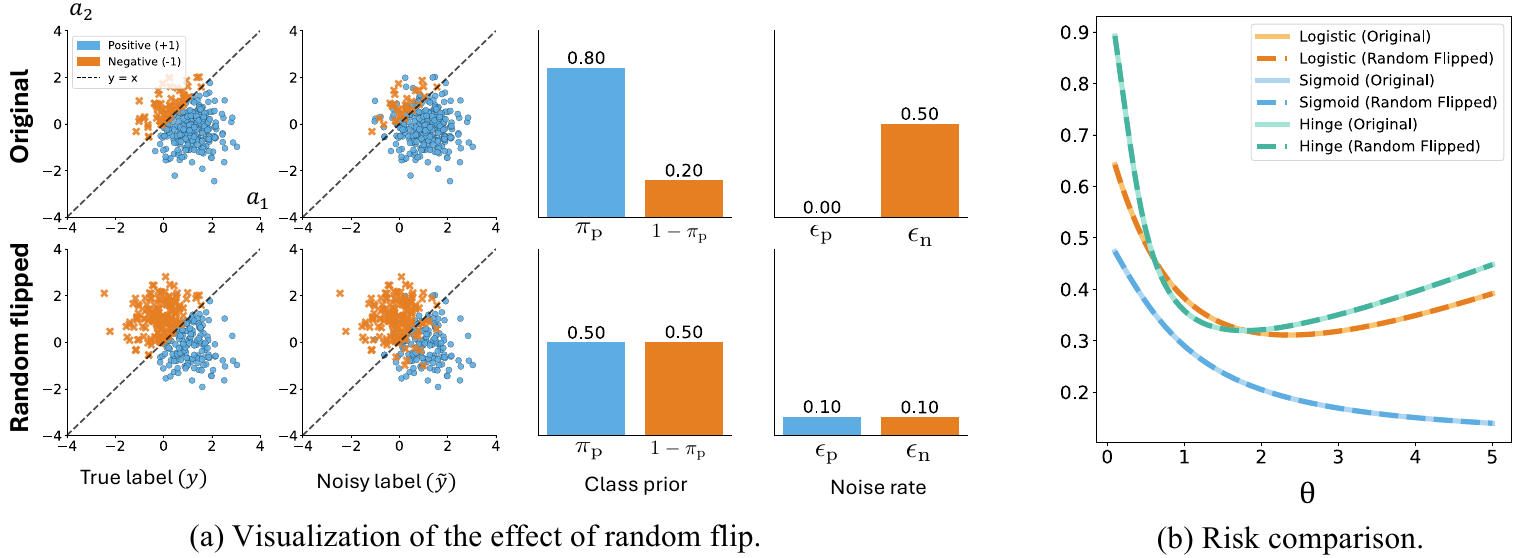}
    \setlength{\abovecaptionskip}{0.0pt}
    \caption{
 We sample action pairs $(a_1, a_2)$ from a 2D Gaussian and assign preference labels using $y = \mathrm{sign}(a_1 - a_2)$.
 The class prior $\pi_\mathrm{p}$ of the original data is $0.8$.
 Asymmetric label noise is introduced with $\epsilon_\mathrm{p} = 0.0$ and $\epsilon_\mathrm{n} = 0.5$, producing noisy labels $\tilde{y}$.
 Figure~(a) shows the action pairs before and after random flipping, along with class priors and noise rates.
 After noise is applied, the class prior becomes $0.5$ and the overall error rate is $\pi_\mathrm{p}\epsilon_\mathrm{p} + (1 - \pi_\mathrm{p})\epsilon_\mathrm{n} = 0.1$.
 Figure~(b) plots the empirical risk across different loss functions under noisy labels, revealing that the risk remains unchanged after random flipping.
 The loss is calculated as $\ell(y \theta (a_1 - a_2))$, with $\theta$ on the $x$-axis.
 }
    \label{fig:diagram_for_flip_symmetrization}
\end{figure}

\begin{lemma}[Equivalence of asymmetric and symmetric noise (Informal)]
    \label{lem:equivalence_of_asymmetric_and_symmetric_noise}
 For any scoring function $g:\mathcal{A} \times \mathcal{A} \to \mathbb{R}$ such that satisfies $g(a_1, a_2) = -g(a_2, a_1)$,
 the risk for $g$ with the noise model in Eq.~\eqref{eq:noise_model_on_preference_label_binary} is equivalent
 to that with symmetric noise for error rate $\pi_\mathrm{p}\epsilon_\mathrm{p} + (1 - \pi_\mathrm{p})\epsilon_\mathrm{n}$ with $p(y = +1) = 1/2$.
\end{lemma}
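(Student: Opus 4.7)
My approach is to prove the lemma by an explicit symmetrization argument: construct a new noisy distribution $\tilde p^{\mathrm{sym}}$ from the asymmetric noisy distribution $\tilde p$ by randomly swapping the pair $(a_1, a_2)$ with probability $1/2$, show that the $\ell$-risk of any antisymmetric scoring function is pointwise invariant under this operation, and then verify that $\tilde p^{\mathrm{sym}}$ coincides with the distribution arising from the symmetric noise model with class prior $1/2$ and flip rate $\bar\epsilon := \pi_\mathrm{p}\epsilon_\mathrm{p} + (1-\pi_\mathrm{p})\epsilon_\mathrm{n}$.

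\textbf{Step 1 (Symmetrization invariance).} Let $\tilde p$ denote the joint distribution of $(a_1, a_2, \tilde y)$ induced by the clean $p(a_1, a_2, y)$ together with the noise model~\eqref{eq:noise_model_on_preference_label_binary}. Define $\tilde p^{\mathrm{sym}}$ by drawing $(a_1, a_2, \tilde y) \sim \tilde p$ and, independently with probability $\tfrac{1}{2}$, replacing the triple by $(a_2, a_1, -\tilde y)$. Because $g(a_1, a_2) = -g(a_2, a_1)$, we have $\tilde y\, g(a_1, a_2) = (-\tilde y)\, g(a_2, a_1)$, so the integrand $\ell(\tilde y\, g(a_1, a_2))$ is pointwise invariant under the flip. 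Hence
\[
\mathbb{E}_{(a_1,a_2,\tilde y)\sim \tilde p^{\mathrm{sym}}}\bigl[\ell(\tilde y\, g(a_1, a_2))\bigr] = \mathbb{E}_{(a_1,a_2,\tilde y)\sim \tilde p}\bigl[\ell(\tilde y\, g(a_1, a_2))\bigr],
\]
so the two population risks agree.

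\textbf{Step 2 (Identifying the noise structure).} Next I would verify that $\tilde p^{\mathrm{sym}}$ is exactly a symmetric noise distribution with class prior $1/2$ and flip rate $\bar\epsilon$. The flipping property of preference labels noted in Sec.~\ref{sec:reward_modeling_as_binary_classification} yields $p(a_1, a_2, y) = p(a_2, a_1, -y)$, from which the clean class prior under $\tilde p^{\mathrm{sym}}$ becomes $\tfrac{1}{2}\pi_\mathrm{p} + \tfrac{1}{2}(1-\pi_\mathrm{p}) = \tfrac{1}{2}$, and the clean class-conditional over inputs is symmetric. For the flip probability, a direct computation gives
\[
p^{\mathrm{sym}}(\tilde y=-1, y=+1) = \tfrac{1}{2}p(\tilde y=-1, y=+1) + \tfrac{1}{2}p(\tilde y=+1, y=-1) = \tfrac{1}{2}\pi_\mathrm{p}\epsilon_\mathrm{p} + \tfrac{1}{2}(1-\pi_\mathrm{p})\epsilon_\mathrm{n},
\]
and dividing by $p^{\mathrm{sym}}(y=+1)=\tfrac{1}{2}$ yields $p^{\mathrm{sym}}(\tilde y=-1\mid y=+1) = \bar\epsilon$; the conjugate computation for $p^{\mathrm{sym}}(\tilde y=+1\mid y=-1) = \bar\epsilon$ follows by the same argument with the roles of $+1$ and $-1$ swapped.

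\textbf{Main obstacle.} I do not anticipate a substantive technical difficulty: once the flip identity $p(a_1, a_2, y) = p(a_2, a_1, -y)$ is in hand, the remainder is essentially two lines of bookkeeping. The one subtlety requiring care is the precise meaning of \emph{equivalence}: it must be equality of the population $\ell$-risks, and this equality only holds when restricted to antisymmetric scoring functions (arbitrary $g$ can separate the two risks). Stating the scope of $g$ unambiguously, and noting that the formulation $g(a_1, a_2) = r(a_1) - r(a_2)$ used in reward modeling automatically lies in this class, is the one place where a careful formal statement is needed before the calculation above can be concatenated into a complete proof.
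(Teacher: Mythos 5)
Your proposal is correct and follows essentially the same route as the paper's proof: the paper likewise defines a flip operation, proves risk invariance for antisymmetric $g$ (its ``Flip Equivalence of Risk'' lemma), and then applies a Bernoulli$(1/2)$ random flip to compute the resulting class prior $1/2$ and symmetric flip rate $\pi_\mathrm{p}\epsilon_\mathrm{p} + (1-\pi_\mathrm{p})\epsilon_\mathrm{n}$. The only cosmetic difference is that the paper formalizes the random flipping via an explicit subset $\cS$ of pairs selected by independent Bernoulli draws, whereas you phrase it as an independent coin flip per sample; the computations are identical.
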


The intuition behind this lemma is that if the scoring function $g$ is symmetric, i.e., $g(a_1, a_2) = -g(a_2, a_1)$, then flipping the actions $a_1$ and $a_2$ does not change the value of the scoring function.
If we consider randomly flipping all the actions with probability $1/2$ (we refer to this as \textit{random flipping}), the class prior and error rates transform as described in the lemma.
We illustrate this process in Fig.~\ref{fig:diagram_for_flip_symmetrization}.
The formal proof is provided in App.~\ref{app:robustness_to_symmetric_noise}.
While prior work~\citep{liang2024robust} also briefly mentioned this equivalence, we give a rigorous proof with the condition prerequisite for the equivalence and the exact error rates of the symmetric noise model that is equivalent, in terms of the risk, to the original asymmetric noise model.

\subsection{The Power of Symmetric Loss}\label{sec:the_power_of_symmetric_loss}

The symmetric loss is a class of loss functions that have the property of $\ell(z) + \ell(-z) = K$, such as the sigmoid loss, unhinged loss, and ramp loss \citep{charoenphakdee2019symmetric}.
We define the $\ell$-risk with the noisy label $\tilde{y}$ similar to Eq.~\eqref{eq:risk_for_reward_modeling} as
\begin{align}
    \label{eq:risk_for_reward_modeling_noisy}
    \tilde{R}_\ell(r) := \mathbb{E}_{a_1, a_2, \tilde{y}}[\ell (\tilde{y} (r(a_1) - r(a_2)))],
\end{align}
where $\mathbb{E}_{a_1, a_2, \tilde{y}}[\cdot]$ is the expectation over the joint density over the input and noisy labels derived from $p(a_1, a_2, y)$ and the noise model defined in Eq.~\eqref{eq:noise_model_on_preference_label_binary}.
Since the scoring function constructed by reward, $r(a_1) - r(a_2)$, satisfies $g(a_1, a_2) = -g(a_2, a_1)$, from Lemma~\ref{lem:equivalence_of_asymmetric_and_symmetric_noise}, we have the following proposition.

\begin{proposition}[Robust reward modeling under asymmetric noise]\label{prop:robustness_to_asymmetric_noise}
If the noisy labels are generated by the noise model in Eq.~\eqref{eq:noise_model_on_preference_label_binary}, for a symmetric loss $\ell_\mathrm{sym}$ and any reward function $r, r'$, we have
    \begin{align}
        \tilde{R}_{\ell_\mathrm{sym}}(r) > \tilde{R}_{\ell_\mathrm{sym}}(r') \iff R_{\ell_\mathrm{sym}}(r) > R_{\ell_\mathrm{sym}}(r').
    \end{align}
\end{proposition}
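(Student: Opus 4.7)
The plan is to chain Lemma~\ref{lem:equivalence_of_asymmetric_and_symmetric_noise} with the classical affine identity that symmetric losses satisfy under symmetric label noise, and conclude by strict monotonicity of an affine map with positive slope.

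First, I would note that the scoring function induced by any reward, $g_r(a_1,a_2) := r(a_1)-r(a_2)$, is antisymmetric: $g_r(a_1,a_2) = -g_r(a_2,a_1)$. Setting $\rho := \pi_\mathrm{p}\epsilon_\mathrm{p} + (1-\pi_\mathrm{p})\epsilon_\mathrm{n}$, Lemma~\ref{lem:equivalence_of_asymmetric_and_symmetric_noise} then lets me re-express $\tilde{R}_{\ell_\mathrm{sym}}(r)$ as the noisy risk of $g_r$ on the random-flipped distribution (with class prior $1/2$) under \emph{symmetric} noise of rate $\rho$; call this $\tilde{R}^{\mathrm{flip}}_{\ell_\mathrm{sym}}(r)$.

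Second, I would establish the standard symmetric-loss identity by conditioning on the clean label $y$ and invoking $\ell_\mathrm{sym}(-z) = K - \ell_\mathrm{sym}(z)$, which collapses the noisy risk to an affine function of the clean risk on the flipped distribution:
\[
\tilde{R}^{\mathrm{flip}}_{\ell_\mathrm{sym}}(r) = (1-2\rho)\, R^{\mathrm{flip}}_{\ell_\mathrm{sym}}(r) + \rho K,
\]
where $R^{\mathrm{flip}}$ denotes the clean risk on the flipped distribution. Third, antisymmetry of $g_r$ implies that the substitution $(a_1,a_2,y)\mapsto(a_2,a_1,-y)$ preserves $\ell_\mathrm{sym}(y\, g_r(a_1,a_2))$, so the clean risk is itself invariant under random flipping: $R^{\mathrm{flip}}_{\ell_\mathrm{sym}}(r) = R_{\ell_\mathrm{sym}}(r)$. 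Chaining the three equalities yields
\[
\tilde{R}_{\ell_\mathrm{sym}}(r) = (1-2\rho)\, R_{\ell_\mathrm{sym}}(r) + \rho K.
\]
Since $\epsilon_\mathrm{p}, \epsilon_\mathrm{n} \in [0, 1/2)$ we have $\rho < 1/2$, hence $1-2\rho > 0$, and the map $x \mapsto (1-2\rho)x + \rho K$ is strictly increasing. Applying it to both $r$ and $r'$ immediately gives the desired equivalence of strict inequalities.

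The main obstacle is notational bookkeeping rather than algebra: one must keep track of four distinct risks (clean vs.~noisy, original vs.~random-flipped distribution) and recognize that the invariance $R^{\mathrm{flip}}_{\ell_\mathrm{sym}}(r) = R_{\ell_\mathrm{sym}}(r)$ relies on antisymmetry of the \emph{scoring function}—a structural property of reward-induced comparisons—rather than on the symmetric condition of the \emph{loss}. Once this is sorted out, the proposition reduces to the positivity of the slope $1-2\rho$, which is guaranteed by the assumption $\epsilon_\mathrm{p}, \epsilon_\mathrm{n} < 1/2$.
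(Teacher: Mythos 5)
Your proof is correct and follows essentially the same route as the paper's: reduce the asymmetric noise to symmetric noise of rate $\rho = \pi_\mathrm{p}\epsilon_\mathrm{p} + (1-\pi_\mathrm{p})\epsilon_\mathrm{n}$ via Lemma~\ref{lem:equivalence_of_asymmetric_and_symmetric_noise} (using antisymmetry of $r(a_1)-r(a_2)$ for the flip invariance), then exploit the symmetric condition to relate the noisy and clean risks by an order-preserving map. The only difference is presentational: you derive the affine identity $\tilde{R}_{\ell_\mathrm{sym}}(r) = (1-2\rho)\,R_{\ell_\mathrm{sym}}(r) + \rho K$ directly, whereas the paper reaches the same order equivalence by invoking the noise-corrected loss of Natarajan et al.\ and Theorem~3 of van Rooyen et al., which internally amounts to the same computation with slope $1-2\rho>0$.
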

This proposition guarantees that the risk minimizer with the asymmetric noise model is equivalent to that with the clean labels in reward modeling.
The proof of this proposition is based on \cite{van2015learning} and in App.~\ref{app:robustness_to_symmetric_noise}.
Therefore, minimizing the risk with a symmetric loss provides a principled and theoretically justified solution for reward modeling under asymmetric noise.
Furthermore, we can construct an offline policy optimization objective for noisy preference data as
\begin{equation}
    \cL_{\mathrm{sympo}}(\pi) := \mathbb{E}_{a_1, a_2, \tilde{y}}\left[\ell_{\mathrm{sym}} \left(\tilde{y} \left(\beta \log \frac{\pi(a_1)}{\pi_{\mathrm{ref}}(a_1)} - \beta \log \frac{\pi(a_2)}{\pi_{\mathrm{ref}}(a_2)}\right)\right)\right],
\end{equation}
which we call \textbf{Sym}metric \textbf{P}reference \textbf{O}ptimization (SymPO). 
\section{Theoretical Analysis}
\label{sec:theoretical_analysis}

In Sec.~\ref{sec:noisy_preference_label}, we have shown the robustness of symmetric losses to the noisy preference label in reward modeling.
In this section, we further guarantee the effectiveness of symmetric losses in policy optimization.
Our ultimate goal is to fine-tune the policy $\pi$ to improve over the reference policy $\pi_\mathrm{ref}$ for some underlying true reward $r_\mathrm{true}$.
We formalize this goal as follows:

\begin{definition}[Policy improvement]
\label{def:policy_improvement}
We say that the policy $\pi$ improves over the reference policy $\pi_\mathrm{ref}$ in terms of the true underlying reward $r_\mathrm{true}$ if
\begin{equation}
    \mathbb{E}_{a \sim \pi}\bigl[r_\mathrm{true}(a)\bigr] \;>\; \mathbb{E}_{a \sim \pi_{\mathrm{ref}}}\bigl[r_\mathrm{true}(a)\bigr].
\end{equation}
\end{definition}

Sec.~\ref{sec:rank_preserving_reward_and_policy_improvement} introduces the notion of a \emph{rank-preserving reward} and shows that it is a sufficient condition for the policy improvement in both RLHF and offline preference optimization.
Then, Sec.~\ref{sec:connection_between_loss_function_and_rank_preserving_reward} shows the connection between the rank-preservingness and the classification-calibrated losses, which includes all common symmetric losses.
This result provides theoretical support for using symmetric losses in policy optimization, thereby reinforcing the validity of our approach, SymPO.
Furthermore, we explore the loss functions that can exactly recover the reward function in Sec~\ref{sec:class_probability_estimation}.

\paragraph{Assumptions.}
We introduce the assumptions used in the theoretical analyses.
\begin{assumption}[Preference is consistent with the true reward]\label{assumption:preference_consistent_with_true_reward}
    $2 p(a_1 \succ a_2) > 1 \iff r_\mathrm{true}(a_1) > r_\mathrm{true}(a_2)$ for the true reward~$r_\mathrm{true}$.
\end{assumption}

\begin{assumption}[Coverage of the probability distribution]\label{assumption:coverage_of_the_probability_distribution}
 For the marginal density $p(a_1, a_2)$ of the input space, we have $p(a_1, a_2) > 0$ for all $(a_1, a_2) \in \mathcal{A} \times \mathcal{A}$.
\end{assumption}

\begin{assumption}[Room for policy improvement]\label{assumption:room_for_policy_improvement}
For at least two actions $a_1, a_2 \in \mathcal{A}$, $r_\mathrm{true}(a_1) \neq r_\mathrm{true}(a_2)$ and $\pi_\mathrm{ref}(a) > 0$ for all $a \in \mathcal{A}$.
\end{assumption}
The first assumption links the probability of a pairwise preference label to the true underlying reward.
It is satisfied in the conventional preference model, e.g., the BT model in Eq.~\eqref{eq:bt_model}.
The second assumption ensures that properties that hold in expectation also hold at all points in the input space.
The third assumption guarantees the possibility of learning a better policy by assuming not all actions are equally good, and that the reference is not optimal, and covers the entire action space.

\subsection{Rank-Preserving Reward and Policy Improvement}
\label{sec:rank_preserving_reward_and_policy_improvement}

We introduce the notion of a \emph{rank-preserving reward} as follows:

\begin{definition}[Rank-preserving reward]
\label{def:rank_preserving_reward}
A reward $r$ is said to be \emph{rank-preserving} w.r.t. another reward $r'$ if, for every pair of actions $(a_1, a_2) \in \mathcal{A} \times \mathcal{A}$, the ordering of their reward values is identical:
\begin{align*}
&r'(a_1) > r'(a_2) \quad \Longrightarrow \quad r(a_1) > r(a_2) \quad \forall (a_1, a_2) \in \mathcal{A} \times \mathcal{A}.
\end{align*}
\end{definition}
Now, we show that the rank-preservingness of the reward w.r.t. the true reward is a sufficient condition for policy improvement in two types of policy learning regimes.

We define the optimal policy of the RLHF problem in Eq.~\eqref{eq:rlhf_problem} w.r.t. a reward function $r$ and a reference policy $\pi_{\mathrm{ref}}$ as $\pi^{*}_{r} = \arg\max_{\pi} \E[a \sim \pi]{r(a)} - \beta \mathrm{KL}(\pi, \pi_\mathrm{ref})$ for $\beta > 0$.
The analytical form of the optimal policy is known \citep{rafailov2024direct}. Therefore, the argmax is well-defined.

\begin{theorem}[Policy improvement in RLHF]\label{thm:policy_improvement_for_rlhf}
With Assumption~\ref{assumption:room_for_policy_improvement}, for any reward that is rank-preserving w.r.t. $r_\mathrm{true}$, the policy improvement holds for the optimal policy over $\pi_{\mathrm{ref}}$ under $r_\mathrm{true}$.
\end{theorem}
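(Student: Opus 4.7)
The plan is to reduce the claim to a covariance inequality under $\pi_{\mathrm{ref}}$ and then exploit rank preservation through a Chebyshev-type argument. First, I would invoke the standard closed-form solution of Eq.~\eqref{eq:rlhf_problem}: for any reward $r$,
\[
\pi^{*}_{r}(a) \;=\; \frac{\pi_{\mathrm{ref}}(a)\exp(r(a)/\beta)}{Z_r}, \qquad Z_r \;=\; \sum_{a'\in\cA}\pi_{\mathrm{ref}}(a')\exp(r(a')/\beta),
\]
which is finite and strictly positive because $|\cA|<\infty$ and $\pi_{\mathrm{ref}}(a)>0$ by Assumption~\ref{assumption:room_for_policy_improvement}. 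A direct expansion then rewrites the improvement gap as a covariance:
\[
\mathbb{E}_{a\sim\pi^{*}_{r}}[r_\mathrm{true}(a)] - \mathbb{E}_{a\sim\pi_{\mathrm{ref}}}[r_\mathrm{true}(a)] \;=\; \frac{1}{Z_r}\,\mathrm{Cov}_{a\sim\pi_{\mathrm{ref}}}\!\bigl(r_\mathrm{true}(a),\,\exp(r(a)/\beta)\bigr),
\]
so the theorem reduces to proving that this covariance is strictly positive.

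Next, I would establish non-negativity via the symmetrization identity
\[
2\,\mathrm{Cov}_{a\sim\pi_{\mathrm{ref}}}\!\bigl(f(a),g(a)\bigr) \;=\; \mathbb{E}_{a,a'}\bigl[(f(a)-f(a'))(g(a)-g(a'))\bigr],
\]
with $a,a'$ drawn i.i.d.\ from $\pi_{\mathrm{ref}}$. Taking $f=r_\mathrm{true}$ and $g=\exp(r/\beta)$, the rank-preservingness of $r$ w.r.t.\ $r_\mathrm{true}$ yields $r_\mathrm{true}(a)>r_\mathrm{true}(a') \Rightarrow r(a)>r(a') \Rightarrow g(a)>g(a')$, and symmetrically when the strict inequality is reversed. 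Hence the two factors in the integrand always share the same sign (or vanish together), making the integrand pointwise non-negative and the covariance non-negative.

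Finally, to upgrade this to a strict inequality I would use Assumption~\ref{assumption:room_for_policy_improvement}: it supplies $a_1,a_2\in\cA$ with $r_\mathrm{true}(a_1)>r_\mathrm{true}(a_2)$ (WLOG), so rank preservation forces $g(a_1)>g(a_2)$, and the full-support condition $\pi_{\mathrm{ref}}(a_1)\pi_{\mathrm{ref}}(a_2)>0$ ensures that the ordered pairs $(a_1,a_2)$ and $(a_2,a_1)$ contribute strictly positive terms to the symmetrized expectation. This yields $\mathrm{Cov}_{\pi_{\mathrm{ref}}}(r_\mathrm{true},\exp(r/\beta))>0$ and hence the policy-improvement inequality. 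I do not foresee a substantive obstacle; the only delicate step is the strict inequality, which is precisely where Assumption~\ref{assumption:room_for_policy_improvement} becomes indispensable—rank preservation alone only buys non-negativity. Finiteness of $\cA$ conveniently removes any integrability concerns about $\exp(r/\beta)$, so no additional regularity on $r$ is needed.
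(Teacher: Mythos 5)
Your proof is correct, but it takes a genuinely different route from the paper's own proof of this theorem. The paper proceeds combinatorially: it partitions $\cA$ into level sets $\cA_i=\{a: r_\mathrm{true}(a)=v_i\}$, defines the mass shifts $\Delta_i=\sum_{a\in\cA_i}\pi^*_r(a)-\sum_{a\in\cA_i}\pi_\mathrm{ref}(a)$ (which sum to zero), and then proves two auxiliary lemmas — that the sets of indices with $\Delta_i>0$ and $\Delta_i<0$ are both non-empty, and that every level receiving mass has a strictly higher true reward than every level losing mass — before concluding via the bound $\sum_i v_i\Delta_i\ge v^+_{\min}\sum_{\cI_+}\Delta_j+v^-_{\max}\sum_{\cI_-}\Delta_l>0$. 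Your covariance identity $\mathbb{E}_{\pi^*_r}[r_\mathrm{true}]-\mathbb{E}_{\pi_\mathrm{ref}}[r_\mathrm{true}]=\tfrac{1}{Z_r}\,\mathrm{Cov}_{\pi_\mathrm{ref}}(r_\mathrm{true},e^{r/\beta})$ together with the Chebyshev symmetrization $2\,\mathrm{Cov}(f,g)=\mathbb{E}[(f(a)-f(a'))(g(a)-g(a'))]$ replaces both lemmas with a one-line comonotonicity observation, and it isolates exactly where Assumption~\ref{assumption:room_for_policy_improvement} enters (upgrading $\ge$ to $>$). Amusingly, this is essentially the argument the paper itself deploys in App.~\ref{app:policy_improvement_for_offline_preference_optimization} for the offline preference optimization case (there with $w(a)=\pi(a)/\pi_\mathrm{ref}(a)$ in place of $e^{r(a)/\beta}/Z_r$), so your proof in effect unifies the two cases under one covariance argument; the paper's level-set proof is longer but makes the direction of probability-mass redistribution explicit. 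One small imprecision: when $r_\mathrm{true}(a)=r_\mathrm{true}(a')$ the two factors need not ``vanish together'' (rank preservation is only a one-way implication, so $r(a)\neq r(a')$ is possible), but the product is still zero because the first factor vanishes, so the pointwise non-negativity and the rest of your argument go through unchanged.
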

The proof of this theorem is in App.~\ref{app:policy_improvement_for_rlhf}.
This result implies that a reward function need not precisely match $ r_\mathrm{true}$; it is sufficient to preserve the relative order of actions to achieve higher expected returns than the reference policy.

\begin{remark}[Policy improvement in offline preference optimization]
\label{remark:policy_improvement_for_offline_preference_optimization}
In offline preference optimization, we have the relationship between our optimizing policy $\pi$ and the implicit reward as $r^\pi_\mathrm{imp}(a) = \beta \log \frac{\pi(a)}{\pi_\mathrm{ref}(a)}$.
Therefore, it is straightforward to see that if the implicit reward $r^\pi_{\rm{imp}}$ is rank-preserving w.r.t. $r_\mathrm{true}$, the policy improvement holds, because it simply translates that the policy alocate the larger probability mass for the actions with higher reward compared with the reference policy.
We provide the proof in App.~\ref{app:policy_improvement_for_offline_preference_optimization} for completeness.
\end{remark}

\subsection{Connection between Loss Function and Rank-Preserving Reward}
\label{sec:connection_between_loss_function_and_rank_preserving_reward}
In this subsection, we support the usage of symmetric losses in policy optimization by showing the connection between the rank-preservingness and the classification-calibrated losses, which include diverse loss functions such as the sigmoid, ramp, and unhinged losses.

\paragraph{Classification-calibrated loss.}
\label{sec:classification_calibration}
We invoke the concept of classification-calibrated loss \citep{bartlett2006convexity} in binary classification and connect it to the rank-preserving reward.
The classification calibration is known to be the minimal requirement for the loss function in binary classification \citep{bartlett2006convexity}.
For the precise definition of the classification-calibrated loss, see App.~\ref{app:rank_preservation}.
To make the discussion clear, we assume that there exists the exact risk minimizer $r^*$, such that $R_\ell(r^*) = R_\ell^*$
\footnote{If there is no exact minimizer, the same results hold for the sequences $r_1, r_2, \ldots$ that satisfy $R_\ell(r_i) \to R_\ell^*$ as $ i \to \infty$ as shown in App.~\ref{app:rank_preservation}.}.
Here, we show that if a loss $\ell$ is classification-calibrated, the minimizer of the $\ell$-risk is rank-preserving as follows:

\begin{theorem}[A classification-calibrated loss induces a rank-preserving reward]
    \label{thm:classification_calibrated_loss_induces_rank_preserving_reward}
 With Assumptions~\ref{assumption:preference_consistent_with_true_reward} and \ref{assumption:coverage_of_the_probability_distribution}, if the loss function $\ell$ is classification-calibrated, then the minimizer of the $\ell$-risk, $r^* = \argmin_r R_\ell(r)$, is rank-preserving concerning the true reward.
\end{theorem}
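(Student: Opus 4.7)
The plan is to recast $R_\ell$ in conditional-risk form, invoke the defining property of classification calibration pointwise for each pair, and then lift the resulting sign alignment through the reward-difference constraint via an exchange argument on the finite action space. Let $\eta(a_1, a_2) := p(a_1 \succ a_2)$ and the conditional $\ell$-risk $C_\eta(z) := \eta\,\ell(z) + (1-\eta)\,\ell(-z)$, so that
\[
R_\ell(r) = \mathbb{E}_{(a_1, a_2) \sim p(a_1, a_2)}\bigl[C_{\eta(a_1, a_2)}(r(a_1) - r(a_2))\bigr].
\]
The identity $C_{1-\eta}(-z) = C_\eta(z)$ combined with $\eta(a_2, a_1) = 1 - \eta(a_1, a_2)$ collapses the ordered-pair sum into a sum over unordered pairs carrying the weight $w_{a_1, a_2} := p(a_1, a_2) + p(a_2, a_1)$, which is strictly positive by Assumption~\ref{assumption:coverage_of_the_probability_distribution}. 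By Assumption~\ref{assumption:preference_consistent_with_true_reward}, $\mathrm{sign}(2\eta(a_1, a_2) - 1) = \mathrm{sign}(r_\mathrm{true}(a_1) - r_\mathrm{true}(a_2))$, so proving rank preservation reduces to verifying the pairwise sign condition $\mathrm{sign}(r^*(a_1) - r^*(a_2)) = \mathrm{sign}(2\eta(a_1, a_2) - 1)$ for every pair with $\eta(a_1, a_2) \neq 1/2$.

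Next I would invoke classification calibration in its sharp form, $\inf_{z:\, \mathrm{sign}(z) \neq \mathrm{sign}(2\eta - 1)} C_\eta(z) > \inf_z C_\eta(z)$ for every $\eta \neq 1/2$, which together with the symmetry $C_{1-\eta}(-z) = C_\eta(z)$ yields the pointwise strict comparison $C_\eta(-|z|) > C_\eta(|z|)$ for $\eta > 1/2$ and $z > 0$. To turn this pointwise fact into the pairwise sign condition at $r^*$, I argue by contradiction and an exchange. Suppose $r^*$ minimizes $R_\ell$ but has a strict inversion $(a, b)$: $r_\mathrm{true}(a) > r_\mathrm{true}(b)$ and $r^*(a) < r^*(b)$. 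Since $|\mathcal{A}| < \infty$, I pick such an inversion that is adjacent in the $r^*$-ordering (no $c$ satisfies $r^*(a) < r^*(c) < r^*(b)$), define $r'$ by swapping the values $r^*(a) \leftrightarrow r^*(b)$, and compute $R_\ell(r^*) - R_\ell(r')$. The contribution at $\{a, b\}$ is strictly positive by the pointwise comparison; for each third action $c$, the combined contribution at $\{a, c\}$ and $\{b, c\}$ is handled using $\eta(a, c) > \eta(b, c)$ (which follows from $r_\mathrm{true}(a) > r_\mathrm{true}(b)$ under the BT model) together with the increasing-differences property of $-C_\eta(z)$ in $(\eta, z)$, a standard consequence of CC together with $\ell(z) \leq \ell(-z)$ for $z \geq 0$. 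The resulting strict decrease contradicts minimality, so no strict inversion exists; the remaining weak inversions $r^*(a) = r^*(b)$ are broken by an infinitesimal perturbation that moves the two values apart in the correct direction, with the pointwise comparison again providing a first-order benefit. Assumption~\ref{assumption:coverage_of_the_probability_distribution} finally upgrades the sign condition from $p$-almost every pair to every pair, delivering rank preservation.

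The main obstacle is precisely the per-$c$ accounting in the exchange step: the reward-difference constraint couples the contributions of all pairs involving $a$ or $b$, and the pairwise weights $w_{a, c}$ and $w_{b, c}$ are in general unequal, so one cannot directly invoke the classical rearrangement inequality. The argument has to rest on the supermodularity of $-C_\eta(z)$ in $(\eta, z)$, which ensures that even under non-uniform weights the swap only moves the larger score toward the higher-$\eta$ pair and thus cannot raise the total, while adjacency in the $r^*$-ordering rules out any third action $c$ whose $r^*(c)$ lies strictly between $r^*(a)$ and $r^*(b)$ and keeps the per-$c$ case analysis tractable. The strict improvement at $\{a, b\}$ then supplies the slack needed to conclude strict decrease of the total risk and close the contradiction.
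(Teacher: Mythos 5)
Your overall reduction---rewriting $R_\ell$ as an expectation of the conditional risk $C_{\eta(a_1,a_2)}(r(a_1)-r(a_2))$, using $\eta(a_2,a_1)=1-\eta(a_1,a_2)$ to pass to unordered pairs, and noting that Assumption~\ref{assumption:preference_consistent_with_true_reward} turns the claim into the Bayes-sign condition $\mathrm{sign}(r^*(a_1)-r^*(a_2))=\mathrm{sign}(2\eta(a_1,a_2)-1)$---matches the paper's setup. But the exchange argument you build on top of it rests on properties that classification calibration does not supply. First, your ``pointwise strict comparison'' $C_\eta(-|z|)>C_\eta(|z|)$ for $\eta>1/2$, $z>0$ is equivalent, via the identity $C_\eta(z)-C_\eta(-z)=(2\eta-1)\bigl(\ell(z)-\ell(-z)\bigr)$, to $\ell(z)<\ell(-z)$ for all $z>0$; calibration only controls where the infimum of $C_\eta$ sits relative to the two half-lines, not this pointwise ordering, so this is an additional hypothesis rather than a consequence. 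Second, the supermodularity of $-C_\eta(z)$ in $(\eta,z)$ that your per-$c$ accounting needs amounts to $\ell(z)-\ell(-z)$ being non-increasing in $z$, again a structural assumption on $\ell$ beyond calibration (your own hedge, ``together with $\ell(z)\leq\ell(-z)$ for $z\geq 0$,'' does not imply it either). Third, you use $\eta(a,c)>\eta(b,c)$ whenever $r_\mathrm{true}(a)>r_\mathrm{true}(b)$, justified ``under the BT model''; the theorem only assumes Assumption~\ref{assumption:preference_consistent_with_true_reward}, which pins down the sign of $2\eta-1$ but says nothing about monotonicity of $\eta(\cdot,c)$ in the true reward, so this step silently imports strong stochastic transitivity.

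The paper avoids all of this by not reasoning pairwise at the minimizer at all: it invokes Theorem~3 of Bartlett et al.\ (2006), by which classification calibration guarantees that $R_\ell(r_i)\to R_\ell^*$ forces the excess zero-one risk to vanish; that excess risk decomposes pointwise as $\mathbf{1}\bigl(\mathrm{sign}(r_i(a_1)-r_i(a_2))\neq\mathrm{sign}(2\eta(a_1,a_2)-1)\bigr)\,|2\eta(a_1,a_2)-1|$, and finiteness of $\mathcal{A}$ together with Assumption~\ref{assumption:coverage_of_the_probability_distribution} forces each term to zero, yielding the sign condition for every pair with $\eta\neq 1/2$. Your route could be salvaged for the restricted family of losses satisfying the two monotonicity conditions above (all entries of Table~\ref{tab:properties_of_loss_functions} do), and it would have the merit of working natively inside the constrained class of scoring functions $r(a_1)-r(a_2)$; but as a proof of the theorem as stated, for an arbitrary classification-calibrated loss under only Assumptions~\ref{assumption:preference_consistent_with_true_reward} and \ref{assumption:coverage_of_the_probability_distribution}, the exchange argument does not go through.
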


Combined with the discussion in Sec.~\ref {sec:rank_preserving_reward_and_policy_improvement}, we can see that the risk minimizer for the diverse classification-calibrated loss functions induces the policy improvement, making them effective for fine-tuning.
This result bridges the minimal requirement for binary classification (classification calibration) with the fundamental goal in policy optimization (policy improvement), thereby reinforcing the solid connection between the two fields.
Since all commonly used symmetric losses are known to be classification-calibrated~\citep{charoenphakdee2019symmetric}, and Proposition~\ref{prop:robustness_to_asymmetric_noise} confirms that the risk minimizer are invariant under noisy preferences, we conclude with a provable guarantee for \textit{robust policy improvement by symmetric losses}, which is the primary focus of this theoretical analysis.

\begin{corollary}[Robust policy improvement by symmetric losses]\label{cor:policy_improvement_by_classification_calibrated_loss}
 With Assumptions~\ref{assumption:preference_consistent_with_true_reward}, \ref{assumption:coverage_of_the_probability_distribution}, and \ref{assumption:room_for_policy_improvement}, if $\ell_{\rm{sym}}$ is symmetric, and the noise is generated following the noise model in Eq~\eqref{eq:noise_model_on_preference_label_binary}, the policy improvement holds for
 \begin{enumerate}
    \item $\mathrm{argmax}_{\pi} \E[a \sim \pi]{r^*(a)} - \beta \mathrm{KL}(\pi, \pi_\mathrm{ref})$, where $r^* = \mathrm{argmin}_{r} \tilde{R}_{\ell_{\rm{sym}}}(r)$ (RLHF),
    \item $\mathrm{argmin}_{\pi} \tilde{R}_{\ell_{\rm{sym}}}(r^\pi_{\rm{imp}})$, where $r^\pi_{\rm{imp}}(a) = \beta \log \frac{\pi(a)}{\pi_\mathrm{ref}(a)}$ (offline preference optimization).
\end{enumerate}
\end{corollary}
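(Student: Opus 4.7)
}
The plan is to chain together the three results developed earlier in the paper: Proposition~\ref{prop:robustness_to_asymmetric_noise} (the noisy $\ell_{\rm sym}$-risk ranks rewards identically to the clean $\ell_{\rm sym}$-risk), Theorem~\ref{thm:classification_calibrated_loss_induces_rank_preserving_reward} (classification calibration implies rank preservation of the clean risk minimizer with respect to $r_{\mathrm{true}}$), and Theorem~\ref{thm:policy_improvement_for_rlhf} / Remark~\ref{remark:policy_improvement_for_offline_preference_optimization} (rank preservation yields policy improvement in RLHF and in offline preference optimization, respectively). The only additional fact needed is a citation: every commonly used symmetric loss (sigmoid, ramp, unhinged) is classification-calibrated, as shown in \citep{charoenphakdee2019symmetric}. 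With this chain in place, both claims of the corollary reduce to checking that the hypotheses of each invoked result are met.

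For part~1 (RLHF), I would first apply Proposition~\ref{prop:robustness_to_asymmetric_noise} to conclude that $r^{*} \in \arg\min_{r} \tilde{R}_{\ell_{\mathrm{sym}}}(r)$ is also a minimizer of the clean risk $R_{\ell_{\mathrm{sym}}}(r)$: the proposition states that the two risks agree on strict inequalities between arbitrary rewards, so they share argmins. Since $\ell_{\mathrm{sym}}$ is classification-calibrated, Theorem~\ref{thm:classification_calibrated_loss_induces_rank_preserving_reward} then gives that $r^{*}$ is rank-preserving with respect to $r_{\mathrm{true}}$, using Assumptions~\ref{assumption:preference_consistent_with_true_reward} and \ref{assumption:coverage_of_the_probability_distribution}. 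Finally, invoking Theorem~\ref{thm:policy_improvement_for_rlhf} with Assumption~\ref{assumption:room_for_policy_improvement} yields $\mathbb{E}_{a \sim \pi^{*}_{r^{*}}}[r_{\mathrm{true}}(a)] > \mathbb{E}_{a \sim \pi_{\mathrm{ref}}}[r_{\mathrm{true}}(a)]$, which is the desired policy improvement.

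For part~2 (offline preference optimization), the argument is parallel but applied to the implicit reward. Let $\pi^{\dagger} \in \arg\min_{\pi} \tilde{R}_{\ell_{\mathrm{sym}}}(r^{\pi}_{\mathrm{imp}})$. Because the implicit reward $r^{\pi}_{\mathrm{imp}}(a) = \beta \log \frac{\pi(a)}{\pi_{\mathrm{ref}}(a)}$ is of the admissible form, and Proposition~\ref{prop:robustness_to_asymmetric_noise} holds for arbitrary rewards, $r^{\pi^{\dagger}}_{\mathrm{imp}}$ also minimizes the clean risk over the family $\{r^{\pi}_{\mathrm{imp}} : \pi\}$. I would then argue that this clean-risk minimizer is rank-preserving via Theorem~\ref{thm:classification_calibrated_loss_induces_rank_preserving_reward}, and conclude via Remark~\ref{remark:policy_improvement_for_offline_preference_optimization}.

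The main obstacle will be the last step for part~2: Theorem~\ref{thm:classification_calibrated_loss_induces_rank_preserving_reward} is stated for the global minimizer over all measurable reward functions, whereas $\pi^{\dagger}$ only minimizes over the restricted class of implicit rewards indexed by policies. I would handle this by noting that, under Assumption~\ref{assumption:room_for_policy_improvement} ($\pi_{\mathrm{ref}} > 0$ everywhere), the map $\pi \mapsto r^{\pi}_{\mathrm{imp}}$ is surjective onto the space of measurable functions up to an additive constant, and the $\ell$-risk in Eq.~\eqref{eq:risk_for_reward_modeling} depends on $r$ only through differences $r(a_1) - r(a_2)$; hence the restricted minimizer coincides with the unrestricted one. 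This reduction lets the same chain of results apply, completing the argument.
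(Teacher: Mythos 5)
Your proposal is correct and follows essentially the same route as the paper, whose own proof is a one-line appeal to Theorem~\ref{thm:policy_improvement_for_rlhf} and Proposition~\ref{prop:robustness_to_asymmetric_noise} (with Theorem~\ref{thm:classification_calibrated_loss_induces_rank_preserving_reward} and the classification-calibration of symmetric losses implicitly supplying the rank-preservation step, exactly as you spell out). Your extra care in part~2 --- observing that $\pi \mapsto r^{\pi}_{\mathrm{imp}}$ is surjective onto measurable rewards up to an additive constant, so the restricted minimizer coincides with the unrestricted one --- addresses a subtlety the paper leaves unstated, and is a welcome tightening rather than a deviation.
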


This corollary supports SymPO for the policy optimization with noisy preferences.

\paragraph{Proof Sketch on Theorem~\ref{thm:classification_calibrated_loss_induces_rank_preserving_reward}.}
We provide a proof sketch to explain how the classification-calibrated loss induces the rank-preserving reward.
For the precise definition of the classification-calibrated loss and the complete proof of the theorem, see App.~\ref{app:rank_preservation}.
The classification-calibrated loss guarantees that the minimizer of the $\ell$-risk, $r^*= \mathrm{argmin}_{r} R_\ell(r)$, is consistent with the Bayes classifier, $\mathrm{sign}\left(2 p(y = +1 | a_1, a_2) - 1\right)$, as
\begin{equation*}
    \mathrm{sign}\left(2 p(y = +1 | a_1, a_2) - 1\right) = \mathrm{sign}\left(r^*(a_1) - r^*(a_2)\right).
\end{equation*}
Here we define $\mathrm{sign}(0):= +1$, following the convention.
In reward modeling, we have $p(y = +1 | a_1, a_2) = p(a_1 \succ a_2)$ by the definition of labels and from Assumption~\ref{assumption:preference_consistent_with_true_reward}, we have
\begin{equation*}
    \mathrm{sign}\left(2 p(y = +1 | a_1, a_2) - 1\right) = \mathrm{sign}\left(r_\mathrm{true}(a_1) - r_\mathrm{true}(a_2)\right),
\end{equation*}
which proves that the minimizer of the $\ell$-risk, $r^*$, is rank-preserving w.r.t. the true reward (QED).

\begin{table}[t]
    \caption{The properties of the loss functions.}
    \centering
    \begin{tabular}{c|c|c|c|c}
    \toprule
 Loss & Function & Convex & Symmetric & CPE \\
    \midrule
 Logistic & $\ell(z) = \log(1 + e^{-z})$ & Yes & No & Yes \\
 Hinge & $\ell(z) = \max(0, 1 - z)$ & Yes & No & No \\
Squared & $\ell(z) = (z - 1)^2$ & Yes & No & Yes \\
Exponential & $\ell(z) = e^{-z}$ & Yes & No & Yes \\
 \textbf{Sigmoid} & $\ell(z) = (1 + e^{z})^{-1}$ & No & Yes & No \\
 \textbf{Unhinged} & $\ell(z) = 1 - z$ & Yes & Yes & No \\
 \textbf{Ramp} & $\ell(z) = \max(0, \min(1, 1/2 - 1/2z))$ & No & Yes & No \\
    \bottomrule
    \end{tabular}
    \label{tab:properties_of_loss_functions}
\end{table}

\subsection{Class Probability Estimation and Exact Reward Recovery}
\label{sec:class_probability_estimation}
Although classification-calibrated losses guarantee the correct \emph{ranking} of the risk minimizer concerning the true reward, one may still inquire whether the exact reward values can be recovered.
To address this, we introduce the notion of \emph{class probability estimation} (CPE) losses, whose minimizer recovers the precise posterior probability \(p(y = +1 | a_1, a_2) = p(a_1 \succ a_2)\) \citep{menon2016linking,heagerty1998composite}.
As can be seen, if we know the relationship between the class probability and the reward, e.g., in the BT model, we can recover the exact reward.
Some convex losses—such as the logistic and squared losses—are CPE losses, thus facilitating exact reward recovery.
By contrast, \emph{all symmetric losses are non-CPE}, as shown in \cite{charoenphakdee2019symmetric}.
Consequently, while symmetric losses do not yield exact reward magnitudes, they preserve the action ranking.
See App.~\ref{app:strictly_proper_composit} for more details on CPE losses.
For an overview of the properties of the losses, see Table~\ref{tab:properties_of_loss_functions}.
\section{Experiment}\label{sec:experiments}
In this section, we demonstrate the effectiveness of symmetric losses for learning from noisy preference data. First, we verify the robustness of symmetric losses against noisy preference data in reward modeling using synthetic data.
Then, using language model alignment tasks, we confirm robust policy improvement with symmetric losses in policy optimization to support the claims discussed in Sec.~\ref{sec:theoretical_analysis}.

\subsection{MNIST Preference Data} \label{sec:exp:mnistpref}

\begin{wrapfigure}{r}{0.3\textwidth}
    \vspace{-1.5em}
    \centering
    \includegraphics[width=0.8\linewidth]{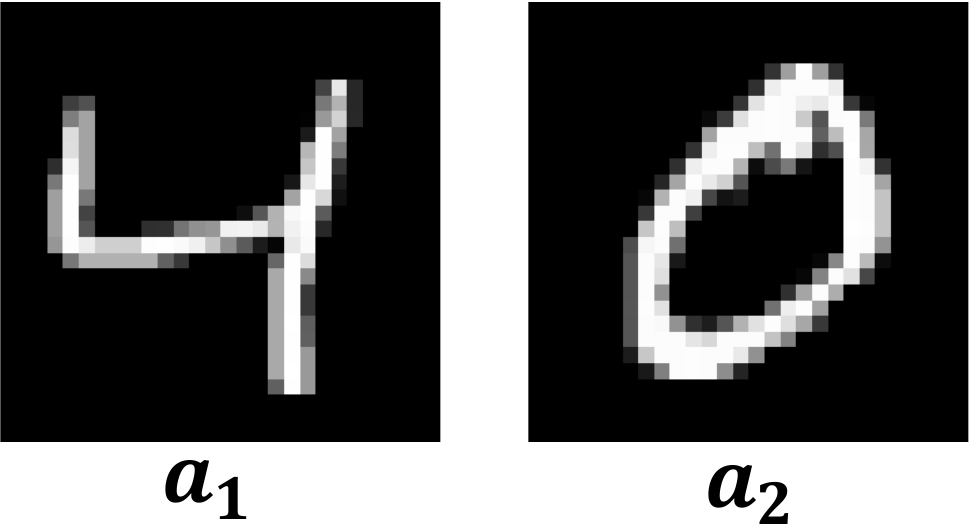}
    \caption{An instance of MNIST Preference dataset.}
    \label{fig: MNIST Pref Image}
    \vspace{-1.7em}
\end{wrapfigure}

\paragraph{Dataset.}
MNIST Preference dataset is a synthetic preference dataset where the input is a pair of MNIST images, as shown in Fig.~\ref{fig: MNIST Pref Image}.
The label is generated following the BT model, where the images' digits determine the rewards.
We generated 10,000 pairs of images for training and 1,000 pairs for testing.
To evaluate the effectiveness of the symmetric loss, we prepared the following symmetric noise: $\epsilon \in \{0.0, 0.1, 0.2, 0.3, 0.4\}$.
We injected the noise by randomly flipping the label with the given noise rate.

\paragraph{Training and Evaluation.}
We compared sigmoid, ramp, and unhinged losses (symmetric) against logistic and hinge losses (non-symmetric) to evaluate the effectiveness of symmetric loss functions.
A CNN classifier was trained using the Adam optimizer for one hundred epochs. For details such as hyperparameters, see App.~\ref{app:experimental_settings_MNISTPref}.
Following reward learning conventions in language modeling, we report the test reward accuracy (correctly predicting the larger digit) averaged over five seeds.
The full results with standard deviations are provided in App.~\ref{app:results_MNISTPref}.

\begin{wraptable}{r}{6.8cm}
    \vspace{-1.7em}
    \caption{Average test reward accuracy (\%).}
    \label{tab:mnistpref_symmetric_noise_performance}
    \begin{tabular}{l|ccccc}
    \toprule
    Loss & $0.0$ & $0.1$ & $0.2$ & $0.3$ & $0.4$ \\
    \midrule
    Logistic & 89.8 & 80.8 & 73.9 & 66.2 & 58.3 \\
    Hinge    & 89.4 & 81.2 & 74.5 & 66.5 & 57.6 \\
    Unhinged & 77.7 & 76.9 & 74.0 & 70.0 & 61.6 \\
    Ramp     & 92.2 & 89.9 & 86.8 & 82.2 & 70.8 \\
    Sigmoid  & 91.2 & 88.9 & 85.6 & 80.9 & 68.8 \\
    \bottomrule
    \end{tabular}
    \vspace{-4mm}
\end{wraptable}

\paragraph{Results.}
Table~\ref{tab:mnistpref_symmetric_noise_performance} presents the test reward accuracy of various losses under symmetric noise, with error rates $\epsilon \in \{0.1, 0.2, 0.3, 0.4\}$.
The symmetric losses, particularly ramp and sigmoid, are notably robust against high noise levels (e.g., $\epsilon = 0.3, 0.4$) compared to the non-symmetric losses (logistic and hinge), suggesting the effectiveness of the symmetric loss functions in robust reward modeling.

\subsection{Language Model Alignment}\label{sec:exp:llm}
To assess the effectiveness of our method in more realistic settings, we performed language modeling experiments using standard benchmark datasets.

\paragraph{Dataset and Training Setup.}
We utilized two benchmark preference datasets: the Anthropic HH dataset \citep{bai2022training} and the UltraFeedback Binarized (UFB) dataset~\citep{cui2023ultrafeedback}\footnote{The URLs are \url{https://huggingface.co/datasets/Anthropic/hh-rlhf} for Anthropic HH and \url{https://huggingface.co/datasets/HuggingFaceH4/ultrafeedback_binarized} for UFB.}.
For each dataset, we randomly sampled 20,000 examples for training and 1,024 examples for evaluation. Each example consists of a prompt paired with two responses: a chosen response and a rejected response.
We employed the Llama-3.2 3B-Instruct model \citep{grattafiori2024llama} for all experiments. Following the procedure in \citet{rafailov2024direct}, we first trained a supervised fine-tuning (SFT) model on prompts and their corresponding chosen responses, then further fine-tuned the SFT model on prompts paired with both chosen and rejected responses.
Each training stage was run for one epoch. To create noisy-label datasets, we varied the noise rate from 0.0 to 0.4 by randomly flipping the pairs of responses according to the specified ratios.
For additional details, please refer to App.~\ref{app:experimental_settings_language_model_alignment}.
We ran the experiment using 8 NVIDIA A100, where the SFT training took 11 minutes, while the fine-tuning took 20 minutes.

\paragraph{Methods.}
We selected two symmetric loss functions—sigmoid and ramp—as they achieved the highest accuracies in experiments with synthetic data.
We compared these to four baseline methods: the standard DPO, which uses the logistic loss, and three noise-robust DPO variants—cDPO \citep{mitchell2023note}, rDPO \citep{chowdhury2024provably}, and ROPO \citep{liang2024robust}.
We set $\beta=0.1$ for all methods. As rDPO constructs an unbiased estimator from noisy labels using the known error rate, we provided rDPO with the true error rate in our experiments.
For ROPO, which involves multiple techniques (e.g., rejection sampling and noisy sample filtering), we implemented only its loss function, a weighted combination of logistic and sigmoid losses, to isolate and directly compare the effect of the loss function itself.
For further details on the baseline methods, please refer to App.~\ref{app:experimental_settings_language_model_alignment}.

\paragraph{Metrics.}
We evaluated the effectiveness of all methods based on response generation quality, measured by comparing the helpfulness of the generated response to that of a reference response.
Judgments were made using GPT-4o-mini as the evaluator. The prompt template used for evaluation is provided in App.~\ref{app:experimental_settings_language_model_alignment}.
After collecting judgments for all test examples, we calculated the win rate as the proportion of examples in which the generated response was preferred by the evaluator, relative to the total number of test examples. 
We generated the responses with 10 random seeds and conducted the t-test to verify the statistical significance between the performances of different methods.
Due to space limitations, we do not report the standard deviations of the win rate in the main text. Please refer to App.~\ref{app:results_language_model_alignment} for the results, including the standard deviations.

\paragraph{Results.}
The results in Table~\ref{tab:hh_ultrabin_winrate} demonstrate consistent improvements over the non-symmetric loss-based baselines (DPO, cDPO, rDPO) and show that our method outperforms ROPO at higher noise rates (e.g., $\epsilon = 0.3, 0.4$) on the UFB dataset.
These findings support the theoretical claims, confirming the robust policy improvement achieved by SymPO in policy optimization.

\begin{table}[]
  \centering
  \caption{Mean win rates across 10 seeds on Anthropic HH and UFB for test-time sample generation for noise rate $\epsilon \in \{0.0, 0.2, 0.3, 0.4\}$. The best methods are highlighted in bold. Methods that are underlined are equivalent to the best method based on a 5\% t-test.}
  \label{tab:hh_ultrabin_winrate}
  \begin{tabular}{l|cccc|cccc}
    \toprule
    \multirow{2}{*}{Method} & \multicolumn{4}{c|}{Anthropic HH} & \multicolumn{4}{c}{UFB} \\
    & $0.0$ & $0.2$ & $0.3$ & $0.4$ & $0.0$ & $0.2$ & $0.3$ & $0.4$ \\
    \midrule
 DPO            & $84.5$ & $81.8$ & $81.5$ & $79.6$ & $60.3$ & $58.3$ & $57.0$ & $55.5$ \\
 cDPO           & 84.3 & 80.8 & 80.6 & 79.7 & 60.2   & 56.8   & 55.4   & 54.4   \\
 rDPO           & 84.4 & 81.6 & 81.0 & 79.7 & 60.4   & 58.1   & 56.6   & 55.2   \\
 ROPO           & \underline{\textbf{86.3}} & 83.6 & \underline{82.8} & \underline{81.2} & \underline{62.9} & \underline{62.9} & 61.0 & 58.4 \\
 Ramp (ours)    & \underline{85.5} & \underline{\textbf{85.1}} & \underline{82.4} & 79.7 & 61.2   & 62.1   & 60.6   & 59.5   \\
 Sigmoid (ours) & \underline{85.8} & \underline{84.4} & \underline{\textbf{83.9}} & \underline{\textbf{82.1}} & \underline{\textbf{63.3}} & \underline{\textbf{63.5}} & \underline{\textbf{62.8}} & \underline{\textbf{60.7}} \\
    \bottomrule
  \end{tabular}
\end{table}

\section{Concluding Remarks}\label{sec:concluding_remarks}

In this paper, we proposed a principled approach to learning reward functions from noisy preference data by framing reward modeling as a binary classification problem.
We demonstrated the theoretical equivalence of asymmetric and symmetric preference noise, motivating the use of symmetric losses for improved robustness.
Building on this, we introduced SymPO, a novel offline preference optimization algorithm.
We provided a provable guarantee of policy improvement for SymPO by linking classification-calibrated losses to policy improvement.
Empirical results further validated the robustness and effectiveness of symmetric losses on both synthetic datasets and language model alignment tasks.
Regarding social impact, our proposed method is capable of improving policies even with noisy preference data, supported by provable guarantees. This enhances the applicability of offline preference optimization to real-world problems, where preference data is often noisy.

\paragraph{Limitations and Future Work.}

Our theoretical guarantee for policy improvement holds in the asymptotic regime and does not fully extend to the finite-sample case.
Nevertheless, the established connection between classification-calibrated losses in binary classification and policy improvement in policy optimization offers valuable insight.
Moreover, our analysis assumes instance-independent noise, whereas real-world preference data often exhibit instance-dependent noise.
For example, annotations for similar responses are more likely to be incorrectly assigned than those for distinct responses.
Therefore, extending our framework to accommodate instance-dependent noise \citep{cheng2020learning,wu2022fair} is an important direction for future research.
\bibliography{main}
\bibliographystyle{plain}

\newpage
\appendix

\section{Related Work} \label{app:related_work}
\subsection{Loss functions in Reward Modeling}

Direct Preference Optimization (DPO) \citep{rafailov2024direct} aligns policies with human preferences by minimizing the cross-entropy or logistic loss under the assumption of the BT model.
Recently, many studies have explored alternatives and generalizations of DPO components \citep{tang2024generalized, azar2024general, zhao2023slic, ethayarajh2024kto,meng2024simpo, wu2025repo}.
For example, \cite{azar2024general} uses a squared loss to learn policies without relying on the BT model, while \cite{zhao2023slic} employs a hinge loss to achieve more efficient preference optimization.
Among these works, Generalized Preference Optimization (GPO) \citep{tang2024generalized} is most closely related to our research. GPO frames preference optimization as a binary classification problem and examines various convex loss functions for their effectiveness in offline preference optimization.
In contrast, our work focuses on non-convex loss functions, specifically exploring symmetric loss. By doing so, we develop a robust reward learning method to label noise.

\subsection{Reward Modeling with Noisy Preference Data}
Several prior works focus on offline preference optimization with noisy preference labels \citep{gao2024impact, fisch2024robust, liang2024robust, chowdhury2024provably}.
\cite{chowdhury2024provably} addresses preference optimization with noisy labels by incorporating techniques from noisy label learning \citep{natarajan2013learning}, while \cite{wu2024towards} leverages distributionally robust optimization (DRO) approaches \citep{duchi2019variance, duchi2021learning}.
The most relevant work is Robust Preference Optimization (ROPO) \citep{liang2024robust}, which introduces a sigmoid-based noise-aware loss that belongs to the class of symmetric loss functions. Their method interpolates between the original loss and the noise-aware loss, eliminating the need for noise rate knowledge required in \cite{chowdhury2024provably}.
Our work differs from theirs in the following aspects: 1. We consider more general asymmetric noise to prove the equivalence of asymmetric and symmetric noise.
2. We comprehensively analyze the learned reward function’s properties, leveraging existing binary classification theory and establishing a policy improvement guarantee for policy optimization using symmetric loss.

\subsection{Weakly Supervised Learning}
This work is inspired by weakly supervised learning in binary classification \citep{sugiyama2022machine}.
In this study, we address the reward modeling with preference noise given binary classification problems with noisy labels \citep{natarajan2013learning, menon2015learning, van2015learning}.
\cite{natarajan2013learning} proposed a method for learning with noisy labels by utilizing noise rate information.
\cite{van2015learning} demonstrated the robustness of symmetric loss against symmetric noise.
In deep learning, several studies have explored robust loss functions for handling label noise \citep{wang2019symmetric, zhang2018generalized, ghosh2017robust}.
Considering corrupted label learning as an extreme case of noisy labels, \cite{charoenphakdee2019symmetric} and \cite{menon2015learning} showed the robustness of symmetric loss in this setting.
Regarding corrupted label learning, \cite{lu2018minimal} and \cite{lu2020mitigating} proposed unlabeled-unlabeled (UU) learning, which constructs an unbiased estimator from two sets of unlabeled data with known class priors.

\section{Experimental settings} \label{app:experimental_settings}
\subsection{MNIST Preference Dataset} \label{app:experimental_settings_MNISTPref}

We explain the experimental settings in this section.

\paragraph{Datasets.}
We constructed the synthetic preference dataset using the MNIST~\citep{deng2012mnist} dataset.
The MNIST dataset is available under the CC-BY-NC-SA 3.0 license.
We randomly paired the images to form the preference dataset without pairs with the same digits.
The reward is generated following the BT model with the reward being the digit of the image e.g. if the digit of the first image is 2 and the digit of the second image is 1, then the label is generated following $p(y=1| \rm{imgs}) = \sigma(2 - 1)$.
We prepared 10000 pairs for training and 1000 pairs for testing.
For introducing noise, we randomly flipped the label with probability $\epsilon = 0.1, 0.2, 0.3, 0.4$.

\paragraph{Hyperparameters.}
For the reward model, we use a convolutional neural network (CNN) consisting of two convolutional layers followed by max pooling, a flattening layer, and three fully connected layers.
We applied dropout with probability $0.2$ and batch normalization for improved regularization and training stability.
The final output is a single scalar, clipped to the range $[-20,20]$.
We trained the reward model for 100 epochs with a batch size of 512 and a learning rate of $1e-3$.

\subsection{Language Model Alignment} \label{app:experimental_settings_language_model_alignment}
This section explains the experimental settings for the language model alignment task.

\paragraph{Datasets.}
We used two publicly available datasets: Anthropic HH \citep{bai2022training} and UltraFeedback Binarized (UFB) \citep{cui2023ultrafeedback}, both released under the MIT license, permitting unrestricted use.
The Anthropic HH dataset contains 170,000 human-AI dialogue examples. Each prompt is followed by two AI-generated responses, which human evaluators annotate to indicate the preferred one.
The UltraFeedback Binarized dataset is a processed subset of the original UltraFeedback dataset. It comprises 64,000 prompts, each initially paired with four responses generated by large language models. Based on GPT-4 evaluations, two responses per prompt are selected to construct the binarized version for preference modeling tasks.
From each dataset, we randomly sampled 20,000 pairs for training and 1024 pairs for testing.

\paragraph{Model.}
Throughout the experiment, we used Llama-3.2 3B-Instruct \cite{grattafiori2024llama}. The licence is provided in \url{https://github.com/meta-llama/llama-models/blob/main/models/llama3_2/LICENSE}.

\paragraph{Baselines and Hyperparameters.}
We list the loss functions for the baselines and our method.
Given $a_1 \succ a_2$, we denote the $\rm{logit} := \log \frac{\pi(a_1)}{\pi_\mathrm{ref}(a_1)} - \log \frac{\pi(a_2)}{\pi_\mathrm{ref}(a_2)}$.

\begin{itemize}
    \item DPO: $\ell_\mathrm{DPO} = - \log \sigma(\beta \rm{logit})$
    \item cDPO: $\ell_\mathrm{cDPO} = - \epsilon \log \sigma(\beta \rm{logit}) - (1 - \epsilon) \log \sigma(-\beta \rm{logit})$
    \item rDPO: $\ell_\mathrm{rDPO} = - \frac{1-\epsilon}{1 - 2 \epsilon} \log \sigma(\beta \rm{logit}) - \frac{\epsilon}{1 - 2 \epsilon} \log \sigma(-\beta \rm{logit})$
    \item ROPO: $\ell_\mathrm{ROPO} = \frac{4\alpha}{(1+\alpha)^2} \ell_\mathrm{DPO} + \frac{4\alpha^2}{(1+\alpha)^2} \sigma(-\beta \rm{logit})$
    \item Ramp (ours): $\ell_\mathrm{Ramp} = \min(1, \max(0, \frac{1}{2} - \frac{\beta}{2} \rm{logit}))$
    \item Sigmoid (ours): $\ell_\mathrm{Sigmoid} = \sigma(- \beta\rm{logit})$
\end{itemize}

We set $\beta = 0.1$ and the learning rate to be $5e-6$ for all the methods. We trained the model for 1 epoch with a batch size of 32.
For cDPO and rDPO, we set $\epsilon$ to be the noise rate of the dataset.
For ROPO, we set $\alpha$ to be $14$ following the original paper \cite{liang2024robust}.
As mentioned in the main body of the paper, we implemented only the loss function of ROPO, leaving aside the other parts of the algorithm (e.g., rejection sampling and noisy sample filtering).

\paragraph{Implementation of SymPO.}
For the reference implementation of SymPO, we provide the brief code block below.

\begin{lstlisting}[language=Python]

def ramp_loss(
    pi_logp_chosen, 
    pi_logp_rejected,
    ref_logp_chosen, 
    ref_logp_rejected,
    beta=0.1
):
    logit = beta * (pi_logp_chosen - pi_logp_rejected) \
    - beta * (ref_logp_chosen - ref_logp_rejected)
    return torch.minimum(1, torch.maximum(0, 0.5 - logit))

def sigmoid_loss(   
    pi_logp_chosen, 
    pi_logp_rejected,
    ref_logp_chosen, 
    ref_logp_rejected,
    beta=0.1
):
    logit = beta * (pi_logp_chosen - pi_logp_rejected) \
    - beta * (ref_logp_chosen - ref_logp_rejected)
    return torch.sigmoid(-logit)

\end{lstlisting}

\paragraph{Evaluation.}

In the evaluation, we used the following prompt to compare the generated and reference responses.

\begin{lstlisting}
For the following query to a chatbot, determine which response is more helpful.

**Query:** {query}

**Response A:** {response_A}

**Response B:** {response_B}

FIRST, provide a one-sentence comparison of the two responses, explaining which response is more helpful by indicating that it offers accurate information. 

SECOND, on a new line, state only "A" or "B" to indicate which response is more helpful. 

Use the following format:

Comparison: <one-sentence comparison and explanation>

More helpful: <"A" or "B">
\end{lstlisting}

\section{Proofs} \label{app:proof}
Here, we provide the proof of the lemmas, propositions, and theorems presented in the main body.
\subsection{Robustness to the Noisy Preferences} \label{app:robustness_to_symmetric_noise}

\paragraph{Proof of Lemma \ref{lem:equivalence_of_asymmetric_and_symmetric_noise}.}
We have scoring function $g:\mathcal{A} \times \mathcal{A} \to \mathbb{R}$ such that satisfies $g(a_1, a_2) = -g(a_2, a_1)$.
The label $Y$ is generated by some preference model $p(a_1 \succ a_2)$.
\begin{align*}
    p(Y = +1|a_1, a_2) &= p(a_1 \succ a_2), \quad \forall a_1, a_2 \in \mathcal{A}, \nonumber \\
    p(Y = -1|a_1, a_2) &= p(a_2 \succ a_1), \quad \forall a_1, a_2 \in \mathcal{A}.
\end{align*}

Then, we define the noise model as
\begin{align*}
    p(\tilde{Y} = -1|Y = +1) = \epsilon_\mathrm{p}, \quad
    p(\tilde{Y} = +1|Y = -1) = \epsilon_\mathrm{n},
\end{align*}
where $\epsilon_\mathrm{p}, \epsilon_\mathrm{n} \in [0, 0.5)$.

To investigate the unique property of the preference label, we define the operation of flipping.
\begin{definition}[Flip Function]
    \label{def:flip_function}
    For a subset $\cS \subseteq \mathcal{A} \times \mathcal{A}$, the flip function $f_\cS:\cA \times \cA \times \{+1, -1\} \rightarrow \cA \times \cA \times \{+1, -1\}$ is defined as
    \begin{align*}
        f_\cS(a_1, a_2, y) = \begin{cases}
            (a_2, a_1, -y) & \text{if } (a_1, a_2) \in \cS, \\
            (a_1, a_2, y) & \text{otherwise}.
        \end{cases}
    \end{align*}
    We also define the flip function $h_\cS:\cA \times \cA \times \{+1, -1\} \times \{+1, -1\} \rightarrow \cA \times \cA \times \{+1, -1\} \times \{+1, -1\}$ as
    \begin{align*}
        h_\cS(a_1, a_2, y, y') = \begin{cases}
            (a_2, a_1, -y, -y') & \text{if } (a_1, a_2) \in \cS, \\
            (a_1, a_2, y, y') & \text{otherwise}.
        \end{cases}
    \end{align*}
\end{definition}

We establish the equivalence of the risk under the flipping operation as follows.

\begin{lemma}[Flip Equivalence of Risk]
    \label{lem:flip_equivalence_of_risk}
    For any scoring function $g:\mathcal{A} \times \mathcal{A} \to \mathbb{R}$ that satisfies $g(a_1, a_2) = -g(a_2, a_1)$, and a subset $\cS \subseteq \mathcal{A} \times \mathcal{A}$, we have
    \begin{align*}
        \E[p(a_1, a_2, y)]{\ell(y g(a_1, a_2))} = \E[(a_1', a_2', y') = f_\cS(a_1, a_2, y)]{\ell(y' g(a_1', a_2'))}.
    \end{align*}
\end{lemma}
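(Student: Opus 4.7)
}
The plan is to establish the equality pointwise rather than by manipulating the two measures: I will show that for every triple $(a_1, a_2, y)$ in the support of $p$, the integrand $\ell(y \, g(a_1, a_2))$ equals the integrand $\ell(y' \, g(a_1', a_2'))$ evaluated at $(a_1', a_2', y') = f_\cS(a_1, a_2, y)$. Once this is done, since the right-hand side of the claimed identity is by definition the expectation under $p(a_1, a_2, y)$ of $\ell(y' g(a_1', a_2'))$ with $(a_1', a_2', y') = f_\cS(a_1, a_2, y)$, the equality of risks follows immediately.

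The argument splits into two cases depending on whether $(a_1, a_2) \in \cS$. If $(a_1, a_2) \notin \cS$, then $f_\cS$ is the identity and there is nothing to prove. If $(a_1, a_2) \in \cS$, then $(a_1', a_2', y') = (a_2, a_1, -y)$, and I compute
\begin{equation*}
y' \, g(a_1', a_2') \;=\; (-y)\, g(a_2, a_1) \;=\; (-y)\,(-g(a_1, a_2)) \;=\; y\, g(a_1, a_2),
\end{equation*}
where the middle equality uses the antisymmetry assumption $g(a_1, a_2) = -g(a_2, a_1)$. Applying $\ell$ to both sides gives the desired pointwise equality.

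Conceptually, the role of antisymmetry of $g$ is to make $y \, g(a_1, a_2)$ an invariant of the joint swap $(a_1, a_2, y) \mapsto (a_2, a_1, -y)$: the two sign flips cancel. Since reward-induced scoring functions $g(a_1, a_2) = r(a_1) - r(a_2)$ always satisfy this antisymmetry, the lemma applies directly in the reward-modeling setting, which is what is needed to reduce asymmetric to symmetric noise in the subsequent proof of Lemma~\ref{lem:equivalence_of_asymmetric_and_symmetric_noise}.

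I do not anticipate a substantive obstacle here: the result is essentially a bookkeeping statement, and the only thing to be careful about is the interpretation of the right-hand expectation. I will clarify once at the beginning that the expectation on the right is taken over $(a_1, a_2, y) \sim p$, with $(a_1', a_2', y')$ defined as a deterministic function of that triple; the pointwise identity then makes any discussion of pushforward measures unnecessary.
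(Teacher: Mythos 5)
Your proposal is correct and follows essentially the same route as the paper: both arguments split on whether $(a_1,a_2)\in\cS$ and use the antisymmetry of $g$ to show that $\ell(y'\,g(a_1',a_2'))=\ell((-y)(-g(a_1,a_2)))=\ell(y\,g(a_1,a_2))$, the only cosmetic difference being that you phrase the case split pointwise on the integrand while the paper conditions the expectation on $\cS$ versus $\cS^{c}$.
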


\begin{proof}
\begin{align*}
\E[(a_1', a_2', y') =f_{\cS}(a_1, a_2, y)]{\ell(y' g(a_1', a_2'))} &= p(\cS)\E[]{\ell(-y g(a_2, a_1))|\cS} + p(\cS^c)\E[]{\ell(y g(a_1, a_2))|\cS^c}\\
&= p(\cS)\E[]{\ell(y g(a_1, a_2))|\cS} + p(\cS^c)\E[]{\ell(y g(a_1, a_2))|\cS^c}\\
&= \E[p(a_1, a_2, y)]{\ell(y g(a_1, a_2))},
\end{align*}
where the second equality is due to $g(a_1, a_2) = -g(a_2, a_1)$.
\end{proof}

Now, we show a conceptual flipping operation to reduce the asymmetric noise into a symmetric one.
\begin{lemma}[Flip symmetrization]
For all $(a_1, a_2) \in \cA \times \cA$, we generate bernoulli random variable $b_1, \dots, b_{\abs{\cA} \times \abs{\cA}} \sim \mathrm{Bernoulli{1/2}}$.
Let $\rm{ind}(a_1, a_2)$ is the indicator function that maps $(a_1, a_2)$ to the index from 1 to $\abs{\cA} \times \abs{\cA}$.
We consider $\cP = \{(a_1, a_2) \in \cA \times \cA:\quad \text{s.t.}\quad b_{\rm{ind}(a_1, a_2)} = 1\}$, and $h_\cP$.
Consider the random variable $(A_1', A_2', \tilde{Y}', Y') = h_\cP(A_1, A_2, \tilde{Y}, Y)$, where $(A_1, A_2, \tilde{Y}, Y)$ follows the distribution defined above.
Then, we have
\begin{align*}
    P(Y' = +1) = \frac{1}{2},
\end{align*}
\begin{align*}
    P(\tilde{Y}' = -1 | Y' = +1) = P(\tilde{Y}' = +1 | Y' = -1) = \pi_\mathrm{p}\epsilon_\mathrm{p} + (1 - \pi_\mathrm{p})\epsilon_\mathrm{n}.
\end{align*}
\end{lemma}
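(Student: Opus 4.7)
The plan is to exploit two structural features of the construction: first, that the flipping indicator $B(a_1,a_2) := b_{\mathrm{ind}(a_1,a_2)}$ is independent of $(A_1, A_2, \tilde Y, Y)$, and second, that the flip operation $h_\cP$ is an involution that maps $(a_1,a_2,\tilde y, y) \mapsto (a_2,a_1,-\tilde y,-y)$ whenever $(a_1,a_2) \in \cP$. With these two facts, the whole lemma reduces to computing four joint probabilities by conditioning on whether a flip occurred.

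For the class prior claim, I would first condition on $(A_1,A_2) = (a_1,a_2)$ and on the value of $B(a_1,a_2)$. Since $B(a_1,a_2) \sim \mathrm{Bernoulli}(1/2)$ is independent of $Y$ given $(A_1,A_2)$, I get
\begin{align*}
P(Y' = +1 \mid A_1 = a_1, A_2 = a_2)
&= \tfrac{1}{2} P(Y = +1 \mid a_1,a_2) + \tfrac{1}{2} P(Y = -1 \mid a_1,a_2) = \tfrac{1}{2}.
\end{align*}
Taking the expectation over $(A_1,A_2)$ gives $P(Y' = +1) = 1/2$ regardless of $\pi_\mathrm{p}$, which is the desired prior equalization.

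For the noise rate claim, I would compute the joint $P(\tilde Y' = -1, Y' = +1)$ by the same case split. In the no-flip branch the event forces $(\tilde Y, Y) = (-1, +1)$; in the flip branch, involution turns $(-\tilde Y, -Y) = (-1,+1)$ into $(\tilde Y, Y) = (+1, -1)$. So
\begin{align*}
P(\tilde Y' = -1, Y' = +1)
&= \tfrac{1}{2}\bigl[P(\tilde Y = -1, Y = +1) + P(\tilde Y = +1, Y = -1)\bigr] \\
&= \tfrac{1}{2}\bigl[\pi_\mathrm{p}\epsilon_\mathrm{p} + (1 - \pi_\mathrm{p})\epsilon_\mathrm{n}\bigr],
\end{align*}
using the original noise model $P(\tilde Y = -1 \mid Y = +1) = \epsilon_\mathrm{p}$, $P(\tilde Y = +1 \mid Y = -1) = \epsilon_\mathrm{n}$ together with the class prior. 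Dividing by $P(Y' = +1) = 1/2$ yields the claimed error rate. An entirely symmetric calculation gives the same value for $P(\tilde Y' = +1 \mid Y' = -1)$, so the two flip probabilities coincide and the post-symmetrization noise is indeed symmetric.

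I do not expect a serious obstacle; the only subtlety is bookkeeping around the independence of $B(a_1,a_2)$ from the label variables and the correct action of $h_\cP$ on the pair $(\tilde Y, Y)$ (both coordinates flip together, not independently). Making that explicit up front, via the definition of $h_\cS$ in Definition~\ref{def:flip_function}, keeps the conditional computations elementary and avoids any confusion about measurability of $\cP$. No appeal to Assumption~\ref{assumption:coverage_of_the_probability_distribution} or to Lemma~\ref{lem:flip_equivalence_of_risk} is needed here; the latter will be invoked separately to translate this distributional equivalence into the risk equivalence stated in Lemma~\ref{lem:equivalence_of_asymmetric_and_symmetric_noise}.
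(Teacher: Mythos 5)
Your proposal is correct and follows essentially the same route as the paper: condition on whether the pair is flipped (probability $1/2$, independent of the data), compute the post-flip joint probabilities of $(\tilde Y', Y')$, and divide by the equalized prior $P(Y'=+1)=1/2$. The only difference is cosmetic—you make the independence of the Bernoulli flip indicators from $(A_1,A_2,\tilde Y, Y)$ and the symmetric case $P(\tilde Y'=+1\mid Y'=-1)$ explicit, which the paper leaves implicit.
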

\begin{proof}
    We first show the first equation.
    \begin{align*}
        P(Y' = +1) &= p(\cP) \E{\mathbb{I}(Y = -1)|\cP} + p(\cP^c) \E{\mathbb{I}(Y = +1)|\cP^c} \\
        &= \frac{1}{2} \pi_\mathrm{p} + \frac{1}{2} (1 - \pi_\mathrm{p}) \\
        &= \frac{1}{2}.
    \end{align*}
    We then show the second equation.
    \begin{align*}
        P(\tilde{Y}' = -1 | Y' = +1) &= p(\cP) \E{\mathbb{I}(\tilde{Y} = +1)|\cP^c, Y = -1} + p(\cP^c) \E{\mathbb{I}(\tilde{Y} = -1)|\cP^c, Y = +1} \\
        &=\frac{\frac{1}{2} \pi_\mathrm{p}\epsilon_\mathrm{p} + \frac{1}{2} (1 - \pi_\mathrm{p})\epsilon_\mathrm{n} }{\frac{1}{2}} \\
        &= \pi_\mathrm{p}\epsilon_\mathrm{p} + (1 - \pi_\mathrm{p})\epsilon_\mathrm{n}.
    \end{align*}
\end{proof}

Since the flip symmetrization can be achieved by the flip function, we have that the risk with the asymmetric noise model is equivalent to applying the flip symmetrization to the label, which is the risk with the symmetric noise model.

\paragraph{Proof of Proposition \ref{prop:robustness_to_asymmetric_noise}.}
From Lemma~\ref{lem:equivalence_of_asymmetric_and_symmetric_noise}, for the scoring function $r(a_1) - r(a_2)$, the risk with asymmetric noise defined in Eq~\eqref{eq:noise_model_on_preference_label_binary} is equivalent to that with a symmetric noise.

Therefore, we assume the noise model is symmetric as
\begin{align}
    \label{eq:symmetric_noise_model}
    P(\tilde{y} = +1 | y =  -1) = P(\tilde{y} = -1 | y =  +1)= \epsilon, \quad \epsilon \in [0, 0.5),
\end{align}

We introduce the noise-correct loss for a loss $\ell$, developed in \cite{natarajan2013learning} as
\begin{align*}
    \tilde{\ell}(z) = \frac{(1 - \epsilon) \ell(z) - \epsilon \ell(-z)}{1 - 2\epsilon}.
\end{align*}

From Lemma 1 of \cite{natarajan2013learning}, for the risk for clean label (Eq.~\eqref{eq:risk_for_reward_modeling}), with the noisy label (Eq.~\eqref{eq:risk_for_reward_modeling_noisy}), we have
\begin{align}
    \label{eq:noise_correct_loss_equivalence}
    \tilde{R}_{\tilde{\ell}}(r) &= R_\ell(r).
\end{align}

The proof is based on Theorem 3 of \cite{van2015learning}, which guarantees if the $\ell$ is symmetric, for any $\epsilon \in [0, 0.5)$, data distribution $p(a_1, a_2, y)$, and reward functions $r, r'$, we have
\begin{align*}
    \underbrace{\E[p(a_1, a_2, y)]{ \tilde{\ell}(y (r(a_1) - r(a_2))) } > \E[p(a_1, a_2, y)]{ \tilde{\ell}(y (r'(a_1) - r'(a_2))) }}_{\text{(A)}},
\end{align*}
if and only if
\begin{align*}
    \underbrace{\E[p(a_1, a_2, y)]{ \ell(y (r(a_1) - r(a_2))) } > \E[p(a_1, a_2, y)]{ \ell(y (r'(a_1) - r'(a_2))) }}_{\text{(B)}}.
\end{align*}

So, taking $p(a_1, a_2, y) = p(a_1, a_2, \tilde{y})$ with noise model as in Eq.~\eqref{eq:symmetric_noise_model}, for (B), we have
\begin{align*}
    \tilde{R}_{\ell}(r) > \tilde{R}_{\ell}(r').
\end{align*}

Then, From Eq.~\eqref{eq:noise_correct_loss_equivalence}, as for (A), we have
\begin{align*}
    \E[p(a_1, a_2, \tilde{y})]{ \tilde{\ell}(\tilde{y} (r(a_1) - r(a_2))) } &= \E[p(a_1, a_2, y)]{ \ell(y (r(a_1) - r(a_2))) }\\
    &= R_\ell(r).
\end{align*}

Therefore, we have
\begin{align*}
    \tilde{R}_{\ell}(r) > \tilde{R}_{\ell}(r') \iff R_\ell(r) > R_\ell(r').
\end{align*}

\subsection{Policy Improvement} \label{app:policy_improvement_for_rlhf}
\paragraph{Proof of Theorem \ref{thm:policy_improvement_for_rlhf}}
We consider the regularized reward maximization problem for a reward $r$ and a reference policy $\pi_\mathrm{ref}$ defined in Eq.~\eqref{eq:rlhf_problem} as
\begin{align*}
    \pi^*_{r} = \mathrm{argmin}_{\pi} \mathbb{E}_{a \sim \pi} \left[r(a) \right] - \beta \mathrm{KL}(\pi, \pi_\mathrm{ref}).
\end{align*}

Our goal is to show that
for any $r_\mathrm{true}$, such that satisfies Assumption~\ref{assumption:room_for_policy_improvement}: for at least two actions $a_i, a_j \in \mathcal{A}$ with $r_\mathrm{true}(a_i) \neq r_\mathrm{true}(a_j)$, and the reference policy $\pi_\mathrm{ref}(a) > 0$ for all $a \in \mathcal{A}$,
if $r$ is rank-preserving with respect to $r_\mathrm{true}$,
then, the optimal policy for $r$, $\pi^*_r$ achieves the policy improvement over $\pi_\mathrm{ref}$.

\begin{align*}
    \sum_{a \in \mathcal{A}} r_\mathrm{true}(a) \pi^*_r(a) > \sum_{a \in \mathcal{A}} r_\mathrm{true}(a) \pi_\mathrm{ref}(a).
\end{align*}

Since we have the finite action space, $\mathcal{A}, |\mathcal{A}| = K$,
for simplicity, we assume that $r_\mathrm{true}(a_1) \geq r_\mathrm{true}(a_2) \geq \cdots \geq r_\mathrm{true}(a_K)$ and for at least two actions, $r_\mathrm{true}(a_i) > r_\mathrm{true}(a_j)$ with $i < j$.
We also assume that $r$ is rank-preserving with respect to $r_\mathrm{true}$: for any $a, a'$, if $r_\mathrm{true}(a) > r_\mathrm{true}(a')$ then $r(a) > r(a')$.
We proceed with the proof based on the vector form of the reward and policy.

We first use the analytical form of the optimal policy $\pi^*_r$ as:
\begin{align*}
    \pi^*_r(a_k) = \frac{\exp \left( \frac{r(a_k)}{\beta} \right) \pi_\mathrm{ref}(a_k)}{Z},
\end{align*}
where $Z$ is the normalization constant.

Now, we define the groups of actions with different true reward values $v_1> v_2 \ldots, > v_M, M \leq K$ as:
\begin{align*}
    \mathcal{A}_i := \{a \in \mathcal{A} \mid r_\mathrm{true}(a) = v_i\}.
\end{align*}
From the assumption that for at least two actions, $r_\mathrm{true}(a_i) > r_\mathrm{true}(a_j)$, we have $M \geq 2$.

Now, we rewrite the expected reward of the optimal policy and the reference policy as follows:
\begin{align*}
    \sum_{a \in \mathcal{A}} r_\mathrm{true}(a) \pi^*_r(a) = \sum_{i=1}^M v_i \sum_{a \in \mathcal{A}_i} \pi^*_r(a).
\end{align*}

\begin{align*}
    \sum_{a \in \mathcal{A}} r_\mathrm{true}(a) \pi_\mathrm{ref}(a) = \sum_{i=1}^M v_i \sum_{a \in \mathcal{A}_i} \pi_\mathrm{ref}(a).
\end{align*}

We analyze the difference between the expected reward of the optimal policy and the reference policy:
\begin{align*}
    \sum_{a \in \mathcal{A}} r_\mathrm{true}(a) \pi^*_r(a) - \sum_{a \in \mathcal{A}} r_\mathrm{true}(a) \pi_\mathrm{ref}(a)
    &= \sum_{i=1}^M v_i \underbrace{\left( \sum_{a \in \mathcal{A}_i} \pi^*_r(a) - \sum_{a \in \mathcal{A}_i} \pi_\mathrm{ref}(a) \right)}_{:= \Delta_i}.\\
\end{align*}

From the fact that $\pi_{\mathrm{ref}}$ and $\pi^*_r$ are both distributions, we have $\sum_{i=1}^M \Delta_i = 0$.

Now, we separete $\{1, \ldots, M\}$ into three parts:
\begin{align*}
    \mathcal{I}_+ := \{i \mid \Delta_i > 0\},
    \mathcal{I}_0 := \{i \mid \Delta_i = 0\},
    \mathcal{I}_- := \{i \mid \Delta_i < 0\},
\end{align*}
where $\mathcal{I}_+$ and $\mathcal{I}_-$ are non-empty by Lemma \ref{lem:non_empty_delta_plus_and_delta_minus}.

Then, we have:
\begin{align*}
    \sum_{a \in \mathcal{A}} r_\mathrm{true}(a) \pi^*_r(a) - \sum_{a \in \mathcal{A}} r_\mathrm{true}(a) \pi_\mathrm{ref}(a) &= \sum_{i=1}^M v_i \Delta_i\\
    &= \sum_{j \in \mathcal{I}_+} v_j \Delta_j + \sum_{k \in \mathcal{I}_0} v_k \Delta_k + \sum_{l \in \mathcal{I}_-} v_l \Delta_l \\
    &= \sum_{j \in \mathcal{I}_+} v_j \Delta_j + \sum_{l \in \mathcal{I}_-} v_l \Delta_l. \\
\end{align*}

Define $v^+_\mathrm{min} = \min_{i \in \mathcal{I}_+} v_i$ and $v^-_\mathrm{max} = \max_{j \in \mathcal{I}_-} v_j$.
Then, we have from the Lemma \ref{lemma:min_max_v} and the fact that $\sum_{i=1}^M \Delta_i = 0$ that:
\begin{align*}
    \sum_{j \in \mathcal{I}_+} v_j \Delta_j + \sum_{l \in \mathcal{I}_-} v_l \Delta_l \geq v^+_\mathrm{min} \sum_{j \in \mathcal{I}_+} \Delta_j + v^-_\mathrm{max} \sum_{l \in \mathcal{I}_-} \Delta_l > 0.
\end{align*}

This completes the proof.

\begin{lemma}
    \label{lemma:min_max_v}
    \begin{align*}
        \min_{i \in \mathcal{I}_+} v_i > \max_{j \in \mathcal{I}_-} v_j.
    \end{align*}
\end{lemma}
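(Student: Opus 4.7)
The plan is to show that the ratio $P_i/Q_i$, where $P_i := \sum_{a \in \mathcal{A}_i} \pi^*_r(a)$ and $Q_i := \sum_{a \in \mathcal{A}_i} \pi_\mathrm{ref}(a)$, is \emph{strictly} decreasing in $i$. Once this is established, the lemma follows quickly: writing $\Delta_i = P_i - Q_i = Q_i(P_i/Q_i - 1)$, and noting that $Q_i > 0$ since $\pi_\mathrm{ref}(a) > 0$ for every $a \in \mathcal{A}$ by Assumption~\ref{assumption:room_for_policy_improvement}, we have that $\mathrm{sign}(\Delta_i)$ agrees with $\mathrm{sign}(P_i/Q_i - 1)$. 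Strict monotonicity in $i$ then forces every index in $\mathcal{I}_+$ to be strictly smaller than every index in $\mathcal{I}_-$, which is exactly the desired statement once re-expressed in terms of the values $v_i$.

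First, I would substitute the closed-form optimal policy $\pi^*_r(a) = \exp(r(a)/\beta)\pi_\mathrm{ref}(a)/Z$ to obtain
\begin{equation*}
\frac{P_i}{Q_i} \;=\; \frac{1}{Z}\cdot\frac{\sum_{a \in \mathcal{A}_i}\exp(r(a)/\beta)\,\pi_\mathrm{ref}(a)}{\sum_{a \in \mathcal{A}_i}\pi_\mathrm{ref}(a)},
\end{equation*}
which is $1/Z$ times a $\pi_\mathrm{ref}$-weighted average of the values $\{\exp(r(a)/\beta) : a \in \mathcal{A}_i\}$ over $\mathcal{A}_i$. Next, I would invoke the rank-preserving property of $r$: for every $i < j$, every $a \in \mathcal{A}_i$ satisfies $r_\mathrm{true}(a) = v_i > v_j = r_\mathrm{true}(a')$ for any $a' \in \mathcal{A}_j$, so $r(a) > r(a')$ by Definition~\ref{def:rank_preserving_reward}, and hence $\exp(r(a)/\beta) > \exp(r(a')/\beta)$. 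Because the weights $\pi_\mathrm{ref}(\cdot)$ are strictly positive, this pointwise strict domination lifts to the weighted averages, yielding $P_i/Q_i > P_j/Q_j$ for all $i < j$.

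Finally, taking any $i \in \mathcal{I}_+$ and $j \in \mathcal{I}_-$, the chain $P_i/Q_i > 1 > P_j/Q_j$ together with strict monotonicity forces $i < j$, hence $v_i > v_j$. Applying this to the indices that achieve $v^+_\mathrm{min}$ and $v^-_\mathrm{max}$ delivers the claim $v^+_\mathrm{min} > v^-_\mathrm{max}$. The one subtle point—really the only place where something could go wrong—is ensuring the monotonicity is \emph{strict} rather than weak; this is where both strict rank preservation (Definition~\ref{def:rank_preserving_reward}) and the strict positivity of $\pi_\mathrm{ref}$ from Assumption~\ref{assumption:room_for_policy_improvement} are essential. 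If either were relaxed, $P_i/Q_i$ could be flat across adjacent groups and the separation between $\mathcal{I}_+$ and $\mathcal{I}_-$ would collapse.
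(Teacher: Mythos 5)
Your proposal is correct, and it proves the lemma by a route that is recognizably the same idea as the paper's but packaged differently — and, as it happens, more cleanly. The paper argues by contradiction: assuming some $i \in \mathcal{I}_+$ and $j \in \mathcal{I}_-$ with $v_i < v_j$, it bounds the unnormalized differences, $\Delta_i \leq \bigl(\exp(r_i^{\max}/\beta)/Z - 1\bigr)\sum_{a\in\mathcal{A}_i}\pi_\mathrm{ref}(a) < \bigl(\exp(r_j^{\min}/\beta)/Z - 1\bigr)\sum_{a\in\mathcal{A}_i}\pi_\mathrm{ref}(a) \leq \Delta_j$, and derives a sign contradiction. Both arguments hinge on the same engine — rank preservation forces every $\exp(r(a)/\beta)$ in a higher-$v$ group to strictly dominate every one in a lower-$v$ group. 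Your version, however, divides by the reference mass $Q_i$ first and compares the resulting $\pi_\mathrm{ref}$-weighted averages of $\exp(r(\cdot)/\beta)$ across groups, which is a comparison between quantities that each live in $[\min_{\mathcal{A}_i}\exp(r/\beta), \max_{\mathcal{A}_i}\exp(r/\beta)]$ and therefore separate cleanly. The paper's final step instead compares $\bigl(\exp(r_j^{\min}/\beta)/Z - 1\bigr)\sum_{a\in\mathcal{A}_i}\pi_\mathrm{ref}(a)$ against $\Delta_j$, which involves the reference masses of two \emph{different} groups ($\mathcal{A}_i$ versus $\mathcal{A}_j$); justifying that inequality as written requires extra care about the sign of the bracket and the relative masses, a subtlety your normalization sidesteps entirely. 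You also correctly flag where strictness enters (strict rank preservation plus $\pi_\mathrm{ref} > 0$), and your direct monotonicity statement ($P_i/Q_i$ strictly decreasing in $i$) yields the full separation $\min_{i\in\mathcal{I}_+} v_i > \max_{j\in\mathcal{I}_-} v_j$ in one stroke rather than via contradiction. No gaps.
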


\begin{proof}
 We prove this lemma by contradiction.
 Suppose that $\min_{i \in \mathcal{I}_+} v_i \leq \max_{j \in \mathcal{I}_-} v_j$.
 Then, we have $\exists i \in \mathcal{I}_+$ and $j \in \mathcal{I}_-$ such that $v_i \leq v_j$.
 And from the construction of $v_1, \ldots, v_M$, we have $v_i < v_j$.
 Define $r^\mathrm{max}_i = \min_{a \in \mathcal{A}_i} r(a)$ and $r^\mathrm{min}_j = \min_{a \in \mathcal{A}_j} r(a)$.
 Then, from rank-preservingness of $r$ with respect to $r_\mathrm{true}$, we have $r^\mathrm{max}_i < r^\mathrm{max}_j$.

 For $\Delta_i$, we have:
    \begin{align*}
        \Delta_i &= \sum_{a \in \mathcal{A}_i} \pi^*_r(a) - \sum_{a \in \mathcal{A}_i} \pi_\mathrm{ref}(a)\\
        &= \sum_{a \in \mathcal{A}_i} \frac{\exp \left( \frac{r(a)}{\beta} \right) \pi_\mathrm{ref}(a)}{Z} - \sum_{a \in \mathcal{A}_i} \pi_\mathrm{ref}(a)\\
        &\leq \sum_{a \in \mathcal{A}_i} \frac{\exp \left( \frac{r^\mathrm{max}_i}{\beta} \right) \pi_\mathrm{ref}(a)}{Z} - \sum_{a \in \mathcal{A}_i} \pi_\mathrm{ref}(a)\\
        &= \left( \frac{\exp \left( \frac{r^\mathrm{max}_i}{\beta} \right)}{Z} - 1 \right) \sum_{a \in \mathcal{A}_i} \pi_\mathrm{ref}(a)\\
        &< \left( \frac{\exp \left( \frac{r^\mathrm{min}_j}{\beta} \right)}{Z} - 1 \right) \sum_{a \in \mathcal{A}_i} \pi_\mathrm{ref}(a)\\
        &\leq \Delta_j.
    \end{align*}
 Since $i \in \mathcal{I}_+$, we have $j \in \mathcal{I}_+$ but it contradicts the assumption that $j \in \mathcal{I}_-$ and $v_i < v_j$.
 Therefore, we have $\min_{i \in \mathcal{I}_+} v_i > \max_{j \in \mathcal{I}_-} v_j$.
\end{proof}

\begin{lemma}
    \label{lem:non_empty_delta_plus_and_delta_minus}
 Suppose we have the following conditions:
    \begin{enumerate}
        \item For at least two actions $a_i, a_j \in \mathcal{A}$ with $r_\mathrm{true}(a_i) \neq r_\mathrm{true}(a_j)$
        \item $\pi_\mathrm{ref}(a) > 0$ for all $a \in \mathcal{A}$.
    \end{enumerate}
 Then, $\mathcal{I}_+$ and $\mathcal{I}_-$ are non-empty.
\end{lemma}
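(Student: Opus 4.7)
The plan is to exhibit concrete groups with $\Delta_i > 0$ and $\Delta_j < 0$ rather than argue abstractly. Specifically, I will show that the top group $\mathcal{A}_1$ always satisfies $\Delta_1 > 0$ and the bottom group $\mathcal{A}_M$ satisfies $\Delta_M < 0$, whence both $\mathcal{I}_+$ and $\mathcal{I}_-$ are non-empty.

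First I would use the assumptions to rule out degenerate cases. The existence of two actions with $r_\mathrm{true}(a_i) \neq r_\mathrm{true}(a_j)$ forces $M \geq 2$, so $\mathcal{A}_1 \subsetneq \mathcal{A}$ and $\mathcal{A}_M \subsetneq \mathcal{A}$. Combined with $\pi_\mathrm{ref}(a) > 0$ for every $a$, this ensures that
$P_1 := \sum_{a \in \mathcal{A}_1} \pi_\mathrm{ref}(a)$ and $1-P_1 = \sum_{a \notin \mathcal{A}_1} \pi_\mathrm{ref}(a)$
are both strictly in $(0,1)$, and analogously for $\mathcal{A}_M$.

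Next I would invoke rank preservation of $r$ w.r.t. $r_\mathrm{true}$: since $v_1$ is strictly the largest true reward value, every $r(a)$ for $a \in \mathcal{A}_1$ strictly exceeds every $r(a')$ for $a' \notin \mathcal{A}_1$. Introduce the weighted means
\begin{equation*}
\mu_1 := \frac{\sum_{a \in \mathcal{A}_1} \exp(r(a)/\beta)\, \pi_\mathrm{ref}(a)}{P_1}, \qquad \mu_{\bar 1} := \frac{\sum_{a \notin \mathcal{A}_1} \exp(r(a)/\beta)\, \pi_\mathrm{ref}(a)}{1-P_1},
\end{equation*}
so that the strict reward gap forces $\mu_1 > \mu_{\bar 1}$, while the normalizer decomposes as $Z = P_1 \mu_1 + (1-P_1)\mu_{\bar 1}$.

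Plugging the closed-form expression $\pi^*_r(a) = \exp(r(a)/\beta)\pi_\mathrm{ref}(a)/Z$ into the definition of $\Delta_1$ and using the identity $\mu_1 - Z = (1-P_1)(\mu_1 - \mu_{\bar 1})$, a one-line algebraic manipulation gives
\begin{equation*}
\Delta_1 \;=\; \frac{P_1 \mu_1}{Z} - P_1 \;=\; \frac{P_1(1-P_1)(\mu_1 - \mu_{\bar 1})}{Z} \;>\; 0,
\end{equation*}
so $1 \in \mathcal{I}_+$. The argument for $\mathcal{A}_M$ is perfectly symmetric: rank preservation now yields $r(a) < r(a')$ for all $a \in \mathcal{A}_M$, $a' \notin \mathcal{A}_M$, so $\mu_M < \mu_{\bar M}$ and the analogous identity gives $\Delta_M < 0$, placing $M \in \mathcal{I}_-$. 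There is no genuine obstacle in this proof; the only care needed is to verify the non-degeneracy of $P_1, P_M$ and their complements, which is exactly what the two hypotheses of the lemma provide.
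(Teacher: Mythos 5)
Your proof is correct and follows the same basic strategy as the paper's: use the closed-form $\pi^*_r(a)=\exp(r(a)/\beta)\pi_\mathrm{ref}(a)/Z$ together with rank preservation to show that probability mass moves toward the top of the true-reward ordering and away from the bottom. The difference is that the paper argues at the level of two individual actions --- it picks $a_i, a_j$ with $r(a_i)>r(a_j)$ and asserts $\exp(r(a_i)/\beta)>Z>\exp(r(a_j)/\beta)$ --- whereas you work directly with the group-level quantities $\Delta_1$ and $\Delta_M$ via the weighted means $\mu_1,\mu_{\bar 1}$ and the identity $\Delta_1=P_1(1-P_1)(\mu_1-\mu_{\bar 1})/Z$. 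Your version is actually the more careful one: the paper's per-action inequalities only hold if $a_i,a_j$ are taken to be the arg-max and arg-min of $r$ (for intermediate actions $\exp(r(a)/\beta)$ need not straddle $Z$), and even then a sign statement about a single action in $\mathcal{A}_1$ does not by itself give the sign of the group sum $\Delta_1$ when $\mathcal{A}_1$ contains several actions with distinct $r$-values. Your mean-based decomposition handles exactly this, so it both proves the lemma and quietly closes a small gap in the paper's own write-up.
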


\begin{proof}
 From the first condition, we have two actions $a_i, a_j \in \mathcal{A}$ with $r_\mathrm{true}(a_i) \neq r_\mathrm{true}(a_j)$.
 Let assume $r_\mathrm{true}(a_i) > r_\mathrm{true}(a_j)$, then from the rank-preservingness of $r$ with respect to $r_\mathrm{true}$, we have $r(a_i) > r(a_j)$.
 Also, from the second condition, we have $\pi_\mathrm{ref}(a_i) > 0$.
 Then, from the normalization condition, we have
    \begin{align*}
        \pi^*_r(a_i) - \pi_\mathrm{ref}(a_i) = \frac{\exp \left( \frac{r(a_i)}{\beta} \right) \pi_\mathrm{ref}(a_i)}{Z} - \pi_\mathrm{ref}(a_i) > 0\\
        \pi^*_r(a_j) - \pi_\mathrm{ref}(a_j) = \frac{\exp \left( \frac{r(a_j)}{\beta} \right) \pi_\mathrm{ref}(a_j)}{Z} - \pi_\mathrm{ref}(a_j) < 0.
    \end{align*}
 Therefore, $\mathcal{I}_+$ and $\mathcal{I}_-$ are non-empty.
\end{proof}

\subsection{Policy Improvement for Offline Preference Optimization}\label{app:policy_improvement_for_offline_preference_optimization}
Here, we show that the rank-preservingness of the implicit reward $r^\pi_{\rm{imp}}$ with respect to $r_{\rm{true}}$ is equivalent to the policy improvement for offline preference optimization.
\begin{proof}
    Let $A$ be the (finite) action set and write
    \[
    w(a)\;\coloneqq\;\frac{\pi(a)}{\pi_{\mathrm{ref}}(a)},
    \qquad a\in A .
    \]

    \paragraph{Step 1:  Comonotonicity.}
    The rank-preservingness of $r^\pi_{\rm{imp}}$ with respect to $r_{\rm{true}}$
    \begin{align*}
        r_{\rm{true}}(a_1) > r_{\rm{true}}(a_2) &\Longrightarrow r^\pi_{\rm{imp}}(a_1) > r^\pi_{\rm{imp}}(a_2), \quad \forall a_1 \in \cA, \forall a_2 \in \cA,\\
        & \Longrightarrow  \log\!\frac{\pi(a_1)}{\pi_{\rm{ref}}(a_1)} > \log\!\frac{\pi(a_2)}{\pi_{\rm{ref}}(a_2)}, \quad \forall a_1 \in \cA, \forall a_2 \in \cA,\\
        & \Longrightarrow  \frac{\pi(a_1)}{\pi_{\rm{ref}}(a_1)} > \frac{\pi(a_2)}{\pi_{\rm{ref}}(a_2)}, \quad \forall a_1 \in \cA, \forall a_2 \in \cA,\\
    \end{align*}
    is equivalent to
    \begin{align*}
        r_{\rm{true}}(a_1)>r_{\rm{true}}(a_2) &\Longrightarrow w(a_1)>w(a_2), \quad \forall a_1 \in \cA, \forall a_2 \in \cA.\\
    \end{align*}

    \paragraph{Step 2:  Relating expectations to a covariance.}
    Because $\sum_{a\in A}\pi_{\mathrm{ref}}(a)w(a)=1$, the random variable
    $W\!\coloneqq\!w(A)$ with $A\sim\pi_{\mathrm{ref}}$ satisfies
    $\mathbb{E}_{\pi_{\mathrm{ref}}}[W]=1$.  Define $R\!\coloneqq\!r_{\rm{true}}(A)$.  Then
    \[
    \mathbb{E}_{\pi}[\,r_{\rm{true}}(A)\,]
       \;=\;\sum_{a\in A}r_{\rm{true}}(a)\,\pi(a)
       \;=\;\sum_{a\in A}r_{\rm{true}}(a)\,w(a)\,\pi_{\mathrm{ref}}(a)
       \;=\;\mathbb{E}_{\pi_{\mathrm{ref}}}\!\big[RW\big].
    \]
    Subtracting $\mathbb{E}_{\pi_{\mathrm{ref}}}[R]$ on both sides gives the identity
    \begin{align*}
    \mathbb{E}_{\pi}[\,r_{\rm{true}}(A)\,]-\mathbb{E}_{\pi_{\mathrm{ref}}}[\,r_{\rm{true}}(A)\,]
       \;=\;
       \underbrace{%
         \mathbb{E}_{\pi_{\mathrm{ref}}}\!\big[RW\big]
         -\mathbb{E}_{\pi_{\mathrm{ref}}}[R]\,
          \mathbb{E}_{\pi_{\mathrm{ref}}}[W]
       }_{=\;\operatorname{Cov}_{\pi_{\mathrm{ref}}}(R,W)} .
    \end{align*}
    \label{eq:covariance_identity}

    \paragraph{Step 3:  Covariance is non‑negative (and positive if $f$ is non‑constant).}
    For any distribution $\mu$, comonotone random variables have non‑negative
    covariance.  Since $r_{\rm{true}}$ and $w$ are
    strictly comonotone,
    \[
    \operatorname{Cov}_{\pi_{\mathrm{ref}}}(R,W)\;\ge 0,
    \]
    and the inequality is strict whenever $r_{\rm{true}}$ is not constant on~$A$.

    Inserting the covariance bound into \eqref{eq:covariance_identity} yields
    \[
    \mathbb{E}_{\pi}[\,r_{\rm{true}}(A)\,]\;\ge\;
    \mathbb{E}_{\pi_{\mathrm{ref}}}[\,r_{\rm{true}}(A)\,],
    \]
    with strict inequality whenever $r_{\rm{true}}$ is non‑constant, which is guaranteed by the Assumption~\ref{assumption:room_for_policy_improvement}.
    \end{proof}

\subsection{Rank Preservation and Reward Recoverability} \label{app:rank_preservation}
Here, we provide proof of the rank preservation and reward recoverability.

\paragraph{Proof of Theorem \ref{thm:classification_calibrated_loss_induces_rank_preserving_reward}.}
We first define the classification-calibrated loss to prove the theorem and show that it induces the rank-preserving reward.
Below are some definitions necessary to define the classification-calibrated loss.
Consider a binary classification setup with an input $x \in \mathcal{X}$ and a label $y \in \{-1, +1\}$.
Let $\eta(x):= P(Y=+1 \mid x)$ be the instance-conditional posterior of the positive class.
For $x\in \mathcal{X}$ taking $\eta = \eta(x)$ and $\alpha = g(x)$, we consider the conditional $\ell$-risk as
\begin{equation*}
 C^{\ell}_{\eta}(g) = \eta \ell(\alpha) + (1-\eta) \ell(-\alpha),
\end{equation*}
and for $\eta \in [0, 1]$, we define the optimal conditional risk as
\begin{equation*}
 H^\ell_\eta(x) = \inf_{\alpha \in \mathbb{R}} C^\ell_\eta(\alpha).
\end{equation*}

We can define the classification-calibrated loss as
\begin{definition}[Classification-Calibrated Loss]
 A loss function $\ell$ is said to be classification-calibrated if for a sequence $\alpha_1, \alpha_2, \ldots$ that satisfies $\lim_{i \to \infty} \alpha_i = H^\ell_\eta(x)$, we have
    \begin{equation*}
       \liminf_{i \to \infty} \alpha_i \mathrm{sign}(2 \eta - 1) = 1.
    \end{equation*}
\end{definition}

Therefore, if there is the exact minimizer $\alpha^*$ that satisfies $H^\ell_\eta(x) = C^\ell_\eta(\alpha^*)$, then if $\ell$ is classification-calibrated, we have
\begin{equation*}
    \mathrm{sign}(\alpha^*) = \mathrm{sign}(2 \eta - 1).
\end{equation*}

For more details, see \citep{bartlett2006convexity}.

Now, we show that the classification-calibrated loss induces the rank-preserving reward in the sense of limit.
\begin{lemma}[Classification-calibrated loss induces rank-preserving reward]\label{lemma:rank_preservation_in_limit}
 Under the Assumption~\ref{assumption:preference_consistent_with_true_reward} and \ref{assumption:coverage_of_the_probability_distribution}, if $\ell$ is classification-calibrated and for the sequence of the reward $r_1, r_2, \ldots,$ that satisfies $\lim_{i \to \infty} R_\ell(r_i) = R^*_\ell$, then we have
    \begin{equation*}
        \lim_{i \to \infty} \mathrm{sign}(r_i(a_1) - r_i(a_2)) = \mathrm{sign}(r_\mathrm{true}(a_1) - r_\mathrm{true}(a_2)), \quad \forall a_1, a_2 \in \mathcal{A}.
    \end{equation*}
\end{lemma}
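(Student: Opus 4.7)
The plan is to argue by contradiction, using the classification-calibration of $\ell$ to produce a strictly positive pointwise gap in the conditional risk and using the coverage Assumption~\ref{assumption:coverage_of_the_probability_distribution} to lift that pointwise gap into a uniform gap in the full $\ell$-risk.

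Since $\mathcal{A}$ is finite, I first rewrite $R_\ell(r) = \sum_{(a_1,a_2) \in \mathcal{A}^2} p(a_1,a_2)\, C^\ell_{\eta(a_1,a_2)}(r(a_1) - r(a_2))$, where $\eta(a_1,a_2) := p(a_1 \succ a_2)$ and $C^\ell_\eta(\alpha) := \eta \ell(\alpha) + (1-\eta)\ell(-\alpha)$. Assumption~\ref{assumption:preference_consistent_with_true_reward} identifies $\mathrm{sign}(2\eta(a_1,a_2) - 1)$ with $\mathrm{sign}(r_\mathrm{true}(a_1) - r_\mathrm{true}(a_2))$ on pairs where the true rewards differ, so the target conclusion reduces to sign consistency of the minimizing sequence with the induced Bayes classifier. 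The Bartlett--Jordan--McAuliffe characterization of classification calibration \citep{bartlett2006convexity} then provides, for each $\eta \ne 1/2$, a strict gap $\delta_\eta := H^-_\eta - H^*_\eta > 0$, where $H^*_\eta := \inf_\alpha C^\ell_\eta(\alpha)$ and $H^-_\eta := \inf_{\alpha(2\eta - 1) \le 0} C^\ell_\eta(\alpha)$.

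Assuming for contradiction that there exist $a_1^\star, a_2^\star \in \mathcal{A}$ with $r_\mathrm{true}(a_1^\star) \ne r_\mathrm{true}(a_2^\star)$ and a subsequence $r_{i_k}$ whose score at this pair has the wrong sign (WLOG $\eta^\star := \eta(a_1^\star, a_2^\star) > 1/2$), the pointwise CC gap yields $C^\ell_{\eta^\star}(r_{i_k}(a_1^\star) - r_{i_k}(a_2^\star)) \ge H^*_{\eta^\star} + \delta_{\eta^\star}$, while $C^\ell_\eta \ge H^*_\eta$ holds trivially at every other pair. Summing and using $p(a_1^\star, a_2^\star) > 0$ from Assumption~\ref{assumption:coverage_of_the_probability_distribution}, I obtain $R_\ell(r_{i_k}) \ge \sum_{(a_1, a_2)} p(a_1, a_2)\, H^*_{\eta(a_1, a_2)} + p(a_1^\star, a_2^\star)\, \delta_{\eta^\star}$, where the extra term is a strictly positive constant independent of $k$.

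The remaining step, which I expect to be the main obstacle, is to identify $R^*_\ell$ with the pointwise lower bound $\sum p\, H^*_\eta$ so that the preceding display contradicts $R_\ell(r_{i_k}) \to R^*_\ell$. The difficulty is that the conditional minimizers $\alpha^\star(a_1, a_2)$ are in general not cycle-consistent, so they cannot simultaneously be realized as differences of a single reward function. For the symmetric losses motivating SymPO (sigmoid, ramp, unhinged), $H^*_\eta$ is attained only as $\alpha(2\eta - 1) \to +\infty$, so the explicit competitor $\tilde r_n := n\, r_\mathrm{true}$ drives every conditional risk to its infimum pointwise and delivers $R^*_\ell \le \sum p\, H^*_\eta$ directly. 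For a general classification-calibrated loss one must work harder---for instance via a loss-specific approximation or a rearrangement of $r_{i_k}$'s values along the $r_\mathrm{true}$-ranking---but once the matching upper bound is secured, the contradiction $R_\ell(r_{i_k}) \ge R^*_\ell + p(a_1^\star, a_2^\star)\, \delta_{\eta^\star}$ is immediate and completes the proof.
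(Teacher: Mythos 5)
Your route is genuinely different from the paper's. The paper's proof is a two-step black-box argument: it invokes Theorem 3 of \cite{bartlett2006convexity} to pass from $R_\ell(r_i)\to R^*_\ell$ to convergence of the zero--one risk $R(r_i)\to R^*$, then expands the excess zero--one risk as $\mathbb{E}\bigl[\mathbf{1}(\mathrm{sign}(r_i(a_1)-r_i(a_2))\neq\mathrm{sign}(2\eta-1))\,|2\eta-1|\bigr]\to 0$ and uses finiteness of $\mathcal{A}$ together with Assumptions~\ref{assumption:preference_consistent_with_true_reward} and \ref{assumption:coverage_of_the_probability_distribution} to localize this to every pair. You instead re-derive the consistency statement from scratch: you decompose $R_\ell$ into conditional risks, extract the calibration gap $\delta_\eta=H^-_\eta-H^*_\eta>0$ pointwise, and run a contradiction on a wrongly-signed subsequence. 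Both arguments are sound up to the same pivot point, and your version has the virtue of being self-contained and of making that pivot explicit.

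The obstacle you flag --- identifying $R^*_\ell=\inf_r R_\ell(r)$ over \emph{difference-form} scoring functions $g(a_1,a_2)=r(a_1)-r(a_2)$ with the unconstrained pointwise optimum $\sum_{(a_1,a_2)}p(a_1,a_2)H^*_{\eta(a_1,a_2)}$ --- is real, and it is not actually resolved by the paper either: the cited Theorem 3 requires the hypothesis $R_\ell(g_i)\to\inf_g R_\ell(g)$ over \emph{all} measurable $g$ on $\mathcal{A}\times\mathcal{A}$, whereas the lemma only supplies convergence to the restricted infimum. These can differ when the conditional minimizers $\alpha^*(\eta(a_1,a_2))$ fail to be additively decomposable (your cycle-consistency point; the squared loss under a non-BT preference model satisfying only Assumption~\ref{assumption:preference_consistent_with_true_reward} is a concrete instance). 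Your explicit competitor $n\,r_\mathrm{true}$ closes this cleanly for the bounded symmetric losses that SymPO actually uses (sigmoid, ramp), where $H^*_\eta$ is approached as $\alpha(2\eta-1)\to+\infty$, so for the paper's main use case your proof is complete. For the lemma as stated --- arbitrary classification-calibrated $\ell$ --- your argument remains open at exactly the step you identify, so it does not yet prove the full claim; but you have correctly isolated the one step that needs an additional hypothesis or argument, which the paper's proof passes over silently.
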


Now, we can see that the Theorem \ref{thm:classification_calibrated_loss_induces_rank_preserving_reward} is the special case of the Lemma \ref{lemma:rank_preservation_in_limit}, where the exact minimizer of the $\ell$-risk can be achieved.

\begin{proof}
Define the zero-one risk as
\begin{equation*}
 R(r) = \mathbb{E}_{a_1, a_2, y} \left[ \mathbb{I}(\mathrm{sign}(r(a_1) - r(a_2)) = y) \right],
\end{equation*}
and its infimum as $R^* = \inf_r R(r)$, where the $\inf_r$ is taken over all the measurable functions.
From Theorem 3 of \citep{bartlett2006convexity}, if $\ell$ is classification-calibrated, we have
\begin{equation*}
    \lim_{i \to \infty} R_\ell(r_i) = R^*_\ell \Longrightarrow \lim_{i \to \infty} R(r_i) = R^*.
\end{equation*}

Since $R^*$ is achieved by the Bayes classifier,let $\eta(a_1, a_2) = P(Y=+1|a_1, a_2)$, we have
\begin{align*}
  \lim_{i \to \infty} R(r_i) - R^*
  &= \lim_{i \to \infty} \mathbb{E}_{a_1, a_2, y} \left[ \mathbf{1}(\mathrm{sign}(r_i(a_1) - r_i(a_2)) = y) -\mathbf{1}(\mathrm{sign}(2 \eta(a_1, a_2) - 1) = y) \right]\\
  &= \lim_{i \to \infty} \mathbb{E}_{a_1, a_2, y} \left[ \mathbf{1}(\mathrm{sign}(r_i(a_1) - r_i(a_2)) \neq \mathrm{sign}(2 \eta(a_1, a_2) - 1))|2 \eta(a_1, a_2) - 1| \right]\\
  &= 0,
\end{align*}
where $\mathbf{1}(\cdot)$ is the indicator function.
From Assumption \ref{assumption:preference_consistent_with_true_reward} that $\mathrm{sign}(2 \eta(a_1, a_2) - 1) = \mathrm{sign}(r_\mathrm{true}(a_1) - r_\mathrm{true}(a_2))$, we have
\begin{equation*}
  \lim_{i \to \infty} \mathbb{E}_{a_1, a_2, y} \left[ \mathbf{1}(\mathrm{sign}(r_i(a_1) - r_i(a_2))) \neq \mathrm{sign}(r_\mathrm{true}(a_1) - r_\mathrm{true}(a_2))|2 \eta(a_1, a_2) - 1| \right] = 0.
\end{equation*}

Since $|\mathcal{A}|$ is finite, we have
\begin{equation*}
  \sum_{a_1, a_2 \in \mathcal{A}} p(a_1, a_2) \lim_{i \to \infty} \mathbf{1}(\mathrm{sign}(r_i(a_1) - r(_ia_2))) \neq \mathrm{sign}(r_\mathrm{true}(a_1) - r_\mathrm{true}(a_2))|2 \eta(a_1, a_2) - 1| = 0.
\end{equation*}

From Assumption \ref{assumption:coverage_of_the_probability_distribution} that $p(a_1, a_2) > 0$ for all $a_1, a_2 \in \mathcal{A}$, we have
\begin{equation*}
  \lim_{i \to \infty} \mathbf{1}(\mathrm{sign}(r_i(a_1) - r_i(a_2))) \neq \mathrm{sign}(r_\mathrm{true}(a_1) - r_\mathrm{true}(a_2))|2 \eta(a_1, a_2) - 1| = 0, \quad \forall a_1, a_2 \in \mathcal{A}.
\end{equation*}

If $r_{\rm{true}}(a_1) > r_{\rm{true}}(a_2)$, from Assumption~\ref{assumption:preference_consistent_with_true_reward}, we have $2 \eta(a_1, a_2) - 1 >0$ ,therefore, 
\begin{equation*}
    \lim_{i \to \infty} r_i(a_1) - r_i(a_2) > 0,
\end{equation*}
which guarantees the rank-preservation.
\end{proof}

\paragraph{Proof of Corollary \ref{cor:policy_improvement_by_classification_calibrated_loss}.}

This is the direct consequence of Theorem \ref{thm:policy_improvement_for_rlhf} and Proposition~\ref{prop:robustness_to_asymmetric_noise}.
\subsection{Strictly Proper Composite Losses}
\label{app:strictly_proper_composit}
\textit{Strictly proprer composit} losses \citep{menon2016linking,reid2010composite} are the famous class of CPE loss.

\begin{definition}[Strictly proper composite losses]
A loss function $\ell$ is a strictly proper composite with invertible \textit{link function} $f$ if the minimizer $g^*$ of $R_\ell(g)$ is $f \circ \eta$, where $\circ$ is the composition operator.
\end{definition}

If a loss is a strictly proper composite, we can recover the exact recovery of the posterior probability of the class probability by $\eta(a_1, a_2) = f^{-1}\circ g^* (a_1, a_2)$.
Also, if we know the relationship between the class probability and the reward, e.g., in the BT, we can recover the exact reward.
Some losses, such as the logistic, squared, and exponential, are strictly composite.

\newpage
\section{Supplementary Results} \label{app:supplementary_results}
\subsection{MNIST Preference Dataset} \label{app:results_MNISTPref}
We present the result of the MNIST preference dataset with standard deviation in Table~\ref{tab:mnistpref_symmetric_noise_performance_with_variance}.

\begin{table}[htbp]
  \centering
  \caption{Test reward accuracy. Average over 5 seeds.}
  \label{tab:mnistpref_symmetric_noise_performance_with_variance}
  \begin{tabular}{l|ccccc}
  \toprule
Loss & $\epsilon=0.0$ & $\epsilon=0.1$ & $\epsilon=0.2$ & $\epsilon=0.3$ & $\epsilon=0.4$ \\
  \midrule
logistic & 89.8\% (1.0) & 80.8\% (1.2) & 73.9\% (1.4) & 66.2\% (0.7) & 58.3\% (0.4) \\
hinge    & 89.4\% (0.6) & 81.2\% (0.4) & 74.5\% (0.8) & 66.5\% (1.0) & 57.6\% (1.6) \\
unhinged & 77.7\% (1.4) & 76.9\% (1.8) & 74.0\% (1.9) & 70.0\% (0.9) & 61.6\% (2.0) \\
ramp     & 92.2\% (0.7) & 89.9\% (1.0) & 86.8\% (1.4) & 82.2\% (1.3) & 70.8\% (1.4) \\
sigmoid  & 91.2\% (0.2) & 88.9\% (0.4) & 85.6\% (1.1) & 80.9\% (3.0) & 68.8\% (1.6) \\
  \bottomrule
  \end{tabular}
\end{table}

\subsection{Language Model Alignment} \label{app:results_language_model_alignment}

We present the result of the language model alignment task in Sec.~\ref{sec:exp:llm} with standard deviation in Table~\ref{tab:hh_winrate} and \ref{tab:ufb_winrate}.

\begin{table}[h]
  \centering
  \caption{Win rate comparison (\%) under symmetric noise on Anthropic HH. Values are means $\pm$ standard deviations over 10 seeds.}
  \label{tab:hh_winrate}
  \begin{tabular}{l|cccc}
    \toprule
    \multirow{2}{*}{Method} & \multicolumn{4}{c}{Noise Ratio} \\
    & $0.0$ & $0.2$ & $0.3$ & $0.4$ \\
    \midrule
    DPO            & 84.5 $\pm$ 1.4 & 81.8 $\pm$ 1.4 & 81.5 $\pm$ 1.3 & 79.6 $\pm$ 1.1 \\
    cDPO           & 84.3 $\pm$ 1.3 & 80.8 $\pm$ 1.8 & 80.6 $\pm$ 1.2 & 79.7 $\pm$ 1.5 \\
    rDPO           & 84.4 $\pm$ 1.4 & 81.6 $\pm$ 1.4 & 81.0 $\pm$ 1.1 & 79.7 $\pm$ 1.5 \\
    ROPO           & \underline{\textbf{86.3}} $\pm$ 0.8 & 83.6 $\pm$ 1.1 & \underline{82.8} $\pm$ 1.2 & \underline{81.2} $\pm$ 1.4 \\
    Ramp (ours)    & \underline{85.5} $\pm$ 1.1 & \underline{\textbf{85.1}} $\pm$ 1.3 & \underline{82.4} $\pm$ 2.2 & 79.7 $\pm$ 1.6 \\
    Sigmoid (ours) & \underline{85.8} $\pm$ 1.1 & \underline{84.4} $\pm$ 1.5 & \underline{\textbf{83.9}} $\pm$ 1.1 & \underline{\textbf{82.1}} $\pm$ 1.4 \\
    \bottomrule
  \end{tabular}
\end{table}

\begin{table}[h]
  \centering
  \caption{Win rate comparison (\%) under symmetric noise on UFB. Values are means $\pm$ standard deviations over 10 seeds.}
  \label{tab:ufb_winrate}
  \begin{tabular}{l|cccc}
    \toprule
    \multirow{2}{*}{Method} & \multicolumn{4}{c}{Noise Ratio} \\
    & $0.0$ & $0.2$ & $0.3$ & $0.4$ \\
    \midrule
    DPO & $60.3\pm1.2$ & $58.3 \pm 1.0$ & $57.0 \pm  1.0$ & 55.5 $\pm$ 1.5 \\
    cDPO & $60.2 \pm 1.5$ & $56.8 \pm 1.1$ & $55.4 \pm 0.9$ & $54.4 \pm 1.1$ \\
    rDPO & $60.4 \pm 1.4$ & $58.1 \pm 1.3$ & $56.6 \pm 1.3$ & $55.2 \pm 1.3$ \\
    ROPO & 62.9 $\pm$ 1.9 & 62.8 $\pm$ 1.0 & 61.0 $\pm$ 1.3 & 58.4 $\pm$ 0.9 \\
    Ramp (ours) & 61.2 $\pm$ 1.3 & 62.1 $\pm$ 1.7 & 60.6 $\pm$ 1.4 & 59.5 $\pm$ 1.3 \\
    Sigmoid (ours) & 63.3 $\pm$ 1.0 & 63.5 $\pm$ 1.4 & 62.8 $\pm$ 1.1 & 60.7 $\pm$ 1.3 \\
    \bottomrule
  \end{tabular}
\end{table}

\end{document}